\documentclass[letterpaper]{article} 
\usepackage[preprint]{aaai2027}  
\usepackage[hyphens]{url}  
\usepackage{graphicx} 
\def\UrlFont{\rm}  
\usepackage{natbib}  
\usepackage{caption} 
\usepackage{amsmath}
\usepackage{amssymb}
\usepackage{amsthm}
\usepackage{booktabs}

\newtheorem{proposition}{Proposition}

\title{Time--Frequency Geometric Cross-Attention\\for Chunked Vision--Language--Action Models}

\author{
    Shengye Dong, Haochen Niu, Hao Liu, Peiwen Lin, Chuang Wang, Shanmin Pang
}
\affiliations{
}

\begin{document}

\maketitle

\begin{abstract}
Modern vision--language--action (VLA) policies predict a whole \emph{chunk} of actions: one to two seconds of coordinated motion emitted in a single forward pass. Yet an action chunk is essentially a short multivariate trajectory, and inside these models it is represented as a sequence of generic per-timestep hidden tokens and decoded by a linear head. This representation tends to under-serve two structures of motion. The first is \textbf{frequency}: a chunk superimposes a smooth global trend and fine corrective motion across several time scales, and a single token entangles them. The second is \textbf{cross-phase geometric relationships}: within a chunk, the motions of different phases (reach, contact, grasp adjustment, settling) unfold along very different directions---near-orthogonal in the representation space, yet tightly related for the task, and arising across the \emph{time axis} (between phases) rather than at the same instant. Dot-product attention scores alignment by an inner product, so it favors similar (aligned) tokens and is least sensitive exactly near orthogonality, leaving such cross-phase relationships to be recovered by the network through a detour.

We introduce \textbf{Time--Frequency Geometric Cross-Attention (TFGCA)}, a drop-in module that repairs both blind spots. TFGCA uses a \emph{per-dimension learnable stationary wavelet transform} (SWT) to decompose the action chunk into time--frequency tokens, and each time token then retrieves information from them through a cross-attention that fuses the dot product (similarity) with the \emph{wedge}-product magnitude (sensitive to near-orthogonality) via a learnable weight. The module uses a zero-initialized residual, so it reproduces the base behavior exactly at initialization and can be dropped onto a pretrained VLA and fine-tuned jointly with it. Relative to the same-source base, TFGCA improves the near-saturated in-distribution LIBERO by $+1.5$ on average, the OOD benchmark LIBERO-Plus by $+6.3$, and the randomized average under RoboTwin domain randomization by $+28.5$, and raises the overall success rate on three real-robot AgiBot A2 tasks by \textbf{$+11.67$ percentage points}---with the gains larger under out-of-distribution conditions.
\end{abstract}

\section{Introduction}
\label{sec:intro}

Vision--language--action models have converged on a common output interface: rather than emitting one action per step, they predict a \emph{chunk} of $T$ future actions at once and execute some prefix of it before re-planning. Flow-matching and diffusion policies such as $\pi_0$~\citep{black2024pi0}, $\pi_{0.5}$~\citep{pi05_2025}, RDT-1B~\citep{rdt2024}, and the action-chunking transformer ACT~\citep{act2023} all commit to a $T\times D$ trajectory in a single forward pass, where $T$ is the chunk horizon and $D$ the number of action dimensions. Chunked prediction improves temporal consistency and lets a policy plan short-horizon coordination instead of reacting one step at a time.

The chunk is therefore a \emph{structured object}: a short multivariate trajectory whose value lies in how its dimensions move together over time. Current VLAs, however, treat it as an unstructured one. The action tokens leaving the transformer are generic hidden vectors, and a single linear head maps each to an action. Two properties of motion that a trajectory model should exploit are left implicit.

\paragraph{Frequency structure.} A manipulation chunk overlays motion at several time scales. A slow transport component carries the end-effector across the workspace, while faster components perform contact-time corrections, grasp adjustments, and settling. Packed into one hidden token per timestep, these scales are entangled, and the model has no explicit handle on ``the trend'' versus ``the correction.''

\paragraph{Cross-phase geometric relationships.} Skilled manipulation unfolds over time into several phases---reach, contact, grasp adjustment, settling---that are related for the task yet can act along very different directions in the representation space, near-orthogonal in the limit. Where does this near-orthogonal ``division of labor'' occur? Our analysis of RoboTwin 2.0 bimanual data shows that it hardly appears at the same instant, but unfolds sequentially along the \emph{time axis}---trajectories successively occupy a set of near-orthogonal subspaces in ordered temporal phases, which we call \textbf{Temporal Orthogonal Division-of-Labor (TO-DoL)} (metrics and robustness in Appendix~H). Dot-product attention, however, measures only \emph{alignment} and is least sensitive exactly near orthogonality, so it must spend extra capacity to re-encode such cross-phase relationships as similarity. This motivates an inductive hypothesis: giving attention an extra scoring channel sensitive to near-orthogonal components (the wedge product) may let each time token retrieve such \emph{cross-phase} relationships more directly from the whole chunk's time--frequency evidence (Section~\ref{sec:method}).

Frequency structure and geometric attention have precedents in time-series forecasting (e.g., compact forecasters that pair a wavelet transform with a geometric attention). The VLA setting is different in kind: an action chunk is a short, non-periodic trajectory over \emph{actual control degrees of freedom}, and any added module rides on a pretrained multi-billion-parameter backbone whose learned weights it must not disturb at initialization. We therefore design a time--frequency geometric attention module, TFGCA, purpose-built for VLA action prediction; its design parts and contributions follow.

\paragraph{Contributions.} We introduce \textbf{TFGCA}, a time--frequency geometric cross-attention module for VLA action prediction that supplies chunked policies with two motion structures they previously ignored: multi-scale frequency content, and cross-phase geometric relationships within the chunk that fall along near-orthogonal directions. Its two core parts are as follows (full method in Section~\ref{sec:method}).
\begin{itemize}
\item \textbf{(C1, frequency) Per-control-dimension action-space wavelet tokenization.} We project the hidden action tokens to action space and apply a \emph{per-dimension learnable} stationary wavelet transform (SWT), turning each control dimension's trajectory into length-preserving, multi-resolution frequency tokens---supplying the multi-scale frequency structure a base VLA leaves implicit.
\item \textbf{(C2, geometry) A time-to-time--frequency geometric cross-attention.} Each time token retrieves information from the whole chunk's time--frequency tokens through a \emph{learnable} blend of the dot product (similarity) and the wedge product. Proposition~\ref{prop:decomp} shows the wedge is sensitive to near-orthogonal directions in the representation space and can realize orderings the dot channel alone cannot; from this we propose a \emph{geometry-sensitive inductive hypothesis}---that cross-phase, cross-time-scale correlations within a chunk often fall along dot-insensitive near-orthogonal directions, for which the wedge channel can provide a direct score---whose real-task payoff is answered empirically in Section~\ref{sec:exp}.
\end{itemize}

\begin{figure*}[t]
\centering
\includegraphics[width=0.92\textwidth]{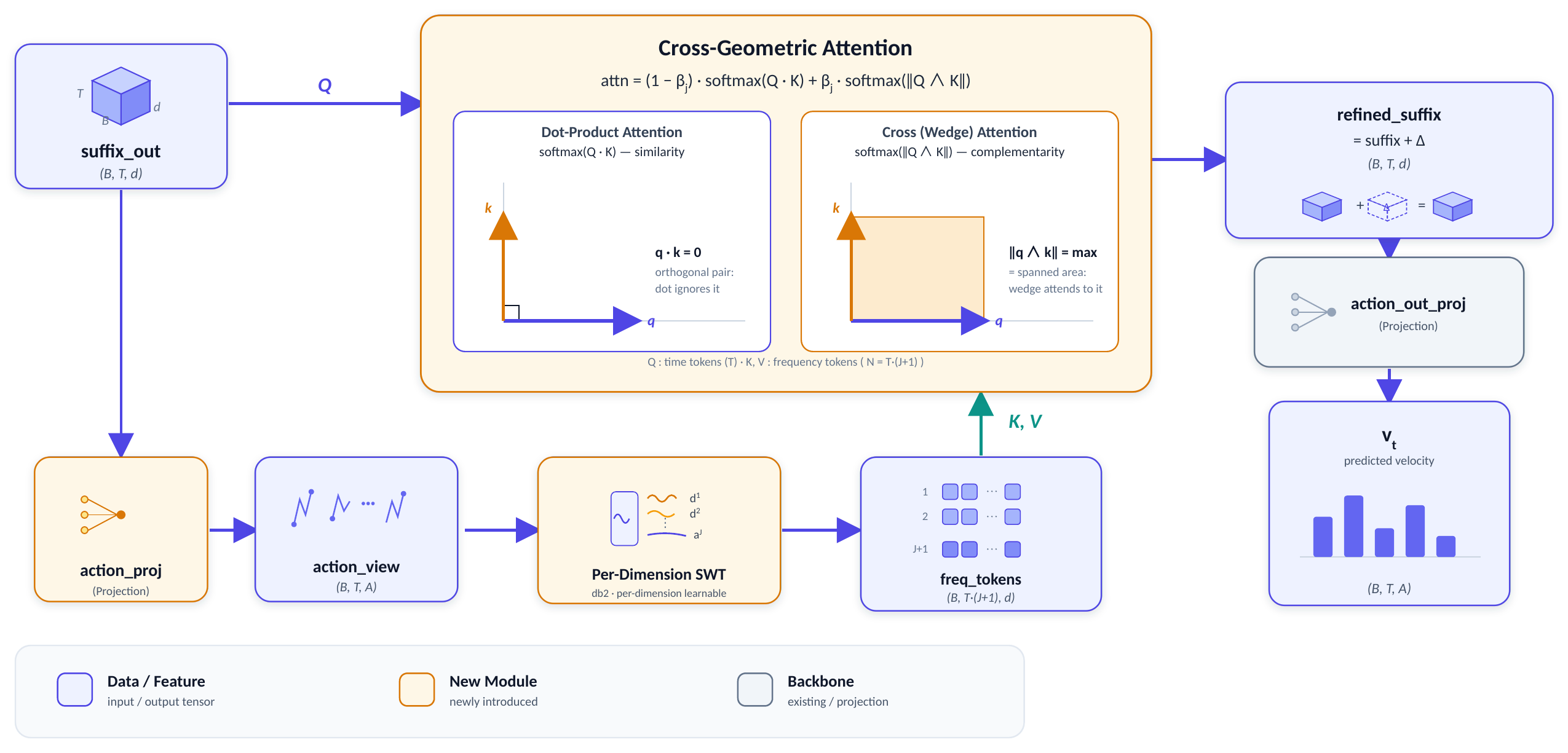}
\caption{TFGCA architecture: a drop-in module between the VLA transformer output and the action head. The frequency branch (action-space projection $\to$ per-dimension SWT $\to$ frequency tokens) supplies the keys/values for the geometric cross-attention, whose queries are the time tokens; its output is written back residually and passed through the action head to give the velocity $v_\theta$. (Blue: data tensors; amber: new modules; gray: backbone.)}
\label{fig:arch}
\end{figure*}

\section{Related Work}

\paragraph{Vision--language--action models and action chunking.} VLAs map instructions and observations to robot actions by adapting large pretrained vision--language backbones. Discrete-token approaches such as RT-2~\citep{rt2_2023} and OpenVLA~\citep{openvla2024} autoregress action tokens, while Octo~\citep{octo2024}, diffusion policy~\citep{diffusionpolicy2023}, $\pi_0$~\citep{black2024pi0}, $\pi_{0.5}$~\citep{pi05_2025}, and RDT-1B~\citep{rdt2024} predict continuous action chunks with diffusion or flow matching~\citep{flowmatching2023}. Action chunking, introduced for imitation with ACT~\citep{act2023}, predicts a block of future actions and executes a prefix, improving temporal coherence, often with temporal ensembling across overlapping chunks to reduce discontinuities. TFGCA operates inside this chunked-prediction interface: it is a representation of the chunk placed between the backbone and the action head, orthogonal to how the chunk is supervised (flow matching, diffusion, or regression).

\paragraph{Frequency-domain methods for sequences and actions.} Frequency-domain inductive biases are well studied in forecasting: Autoformer~\citep{autoformer2021} and FEDformer~\citep{fedformer2022} inject seasonal--trend decomposition and Fourier attention, FreTS~\citep{frets2023} learns in the frequency domain, and SimpleTM~\citep{simpletm2025} couples a stationary wavelet transform with a geometric attention. Within VLAs, FAST~\citep{fast2025} applies a DCT to action chunks to produce compact discrete tokens for autoregressive policies, using frequency for \emph{tokenization and compression}. Our use is different: we treat frequency as a multi-resolution, time-localized \emph{feature} that attention consults, on a continuous flow-matching policy; the stationary (non-decimating) wavelet preserves sequence length, so every band stays aligned with the original timesteps and can serve directly as attention keys.

\paragraph{Geometric structure in attention.} Geometric-algebra networks (the Geometric Algebra Transformer~\citep{gatr2023}; Clifford neural layers~\citep{clifford2023}) represent data as multivectors and act on them equivariantly. We borrow only a single scalar---the wedge-product magnitude---and use it as an attention score alongside the inner product, keeping the module lightweight and free of multivector bookkeeping while capturing the one property we need: sensitivity to orthogonality. Unlike iTransformer~\citep{itransformer2024}, which treats variables (dimensions) as attention tokens, our frequency tokens jointly encode the multi-dimensional action at each time--band position, and the cross-attention retrieves along time--frequency rather than attending between dimensions.

\section{Preliminaries}

\paragraph{VLA action prediction as trajectory forecasting.} We build on a flow-matching VLA ($\pi_{0.5}$). Given observations (images and a language instruction) and proprioceptive state, the policy predicts an action chunk $a \in \mathbb{R}^{T \times D}$, where $T$ is the chunk horizon and $D$ the number of action dimensions. Flow matching trains a velocity field: sample noise $\epsilon$ and a time $\tau \in [0,1]$, form the interpolant $x_\tau = \tau\,\epsilon + (1-\tau)\,a$, and regress the network output $v_\theta(x_\tau, \cdot)$ toward the target velocity $u = \epsilon - a$ with the loss
\begin{equation}
\mathcal{L}_{\mathrm{FM}} = \mathbb{E}_{\tau,\epsilon}\,\big\lVert v_\theta(x_\tau,\cdot) - (\epsilon - a) \big\rVert^2 .
\end{equation}
The backbone is a decoder transformer that jointly attends over a multimodal prefix (image and language tokens) and an action suffix; the last $T$ suffix positions produce hidden action tokens $H \in \mathbb{R}^{T \times d}$ (with $d$ the model width), which a linear action head maps to $v_\theta$. TFGCA is a transformation $H \mapsto \tilde H$ inserted just before this head, applied identically in training and in each denoising step at inference.

\paragraph{Dot-product attention prefers similarity.} For queries $Q$ and keys $K$ split into heads of dimension $d_h$, attention weights come from $q_i^\top k_j / \sqrt{d_h}$, largest when $q_i$ and $k_j$ are aligned and vanishing when orthogonal, so \emph{in a given representation} two orthogonal tokens receive only a small direct score. A deep network can of course relearn features that re-encode orthogonal relationships as aligned and route them through the similarity channel (which is why existing dot-product policies work); but that spends representational capacity, and an explicit orthogonality-sensitive signal removes the detour (formalized in the scope remark of Proposition~\ref{prop:decomp}).

\paragraph{The wedge product measures orthogonality.} The wedge-product magnitude is the area of the parallelogram $q,k$ span, with a closed form via the Cauchy--Schwarz identity (no antisymmetric-tensor construction):
\begin{equation}
\lVert q \wedge k \rVert^2 = \lVert q \rVert^2\,\lVert k \rVert^2 - (q^\top k)^2 .
\end{equation}
It is geometrically complementary to the inner product: zero when $q \parallel k$, maximal (for fixed norms) when $q \perp k$. Using both scores gives attention access to \emph{alignment} and \emph{orthogonality} at once.

\section{Method}
\label{sec:method}

TFGCA transforms the hidden action tokens $H \in \mathbb{R}^{T\times d}$ into refined tokens $\tilde H \in \mathbb{R}^{T\times d}$ in three stages (overall structure in Figure~\ref{fig:arch}): a projection to a per-dimension control space (Section~\ref{subsec:actionproj}), a per-dimension wavelet tokenization (Section~\ref{subsec:swt}), and a geometric cross-attention that fuses the two (Section~\ref{subsec:attn}). Section~\ref{subsec:init} covers the identity-at-initialization construction that makes the module safe to attach to a pretrained policy, and Section~\ref{subsec:theory} gives the wedge's theoretical properties and the mechanism explanation.

\subsection{Action-Space Projection}
\label{subsec:actionproj}

The transformer's action tokens live in an abstract $d$-dimensional space with no explicit per-control-dimension meaning, whereas the structure we want to model---the frequency content of each action dimension and their coordination---is defined in the action coordinates the policy actually outputs. We therefore first project each token to a \emph{per-dimension control-space view},
\begin{equation}
Z^{\mathrm{act}} = \mathrm{ActionProj}(H) \in \mathbb{R}^{T \times D},
\end{equation}
with a dedicated linear map. The physical meaning of the $D$ output channels depends on the embodiment's action parameterization: in RoboTwin 2.0 and AgiBot A2 they correspond to joint-space actions; in LIBERO, the $7$ channels correspond to Cartesian end-effector increments (3-D translation, 3-D rotation) and a 1-D binary gripper command, not $7$ joint angles. The projection is initialized so that $Z^{\mathrm{act}}$ is nonzero from the first step, which matters for gradient flow (Section~\ref{subsec:init}). To give this projection explicit physical grounding, we add an auxiliary alignment loss that supervises $Z^{\mathrm{act}}$ toward the \textbf{stop-gradient} ground-truth action chunk $a$,
\begin{equation}
\mathcal{L}_{\mathrm{align}} = \big\lVert Z^{\mathrm{act}} - \mathrm{sg}(a) \big\rVert^2 ,
\end{equation}
where $\mathrm{sg}(\cdot)$ is the stop-gradient (only the projection is updated, no gradient flows to the target); training minimizes $\mathcal{L}=\mathcal{L}_{\mathrm{FM}}+\lambda\,\mathcal{L}_{\mathrm{align}}$ for a small weight $\lambda$ (per-benchmark values in Appendix~B). This loss is part of the full model; removing it is the ``w/o~alignment'' ablation (Section~\ref{subsec:libero}).

\subsection{Per-Dimension Learnable SWT Tokenization}
\label{subsec:swt}

Given the control-space view $Z^{\mathrm{act}}$, we decompose each action dimension's length-$T$ sequence into multiple frequency bands with a stationary wavelet transform (SWT; Figure~\ref{fig:swt}). Unlike the standard (decimated) wavelet transform, the SWT does not downsample: every band has length $T$, so bands stay aligned with the original timesteps and can serve as attention keys. One level splits an input approximation $a^{(j)}$ into a coarser approximation and a detail band using a low-pass/high-pass filter pair $(h,g)$ applied at dilation $2^{j}$,
\begin{equation}
a^{(j+1)}_t = \sum_{k} h_k\, a^{(j)}_{t + 2^{j} k}, \quad
d^{(j+1)}_t = \sum_{k} g_k\, a^{(j)}_{t + 2^{j} k},
\end{equation}
recursing on the approximation branch. With $J$ levels this yields $J$ detail bands and one final approximation, $\{d^{(1)},\dots,d^{(J)}, a^{(J)}\}$, ordered high to low frequency. We make three design choices specific to robot actions.

\paragraph{Per-dimension learnable filters.} Each action dimension gets its own filter pair, initialized to Daubechies-2 (\texttt{db2}, length 4; Haar is an option). \texttt{db2}'s two vanishing moments make its detail coefficients cancel locally linear motion and respond only to curvature, suiting the long, smooth segments common in manipulation; filters are learnable, so each dimension specializes a task-relevant decomposition on top of this strong prior.

\paragraph{Per-dimension DC removal.} A smooth demonstration's energy sits mostly at zero frequency (a near-constant offset) and would swamp the detail bands, so we subtract each dimension's mean over time before decomposition. We deliberately do \emph{not} divide by the standard deviation: the amplitude contrast between high-motion moments (contact, grasp) and stationary segments is exactly the signal cross-attention should key on, and normalizing it away would amplify noise in still segments.

\paragraph{Optional derivative pre-processing.} Because action sequences can be low-frequency-dominated even after DC removal, the view may be finite-differenced before the SWT to move task-relevant energy into higher bands (the order used per benchmark is in Appendix~B).

At each (time, band) position, a per-band linear map embeds the whole \emph{$D$-dimensional action vector} at that position into the model width $d$ ($\mathbb{R}^{D}\!\to\!\mathbb{R}^{d}$), plus a band-type embedding and a time-position embedding. This step \emph{jointly encodes} the multiple action dimensions at the same time--band position into a single token---cross-dimension structure is carried as token content rather than paired dimension-by-dimension by the subsequent attention. Stacking the $J{+}1$ bands over $T$ timesteps and flattening (band index fastest) gives a sequence of frequency tokens
\begin{equation}
F \in \mathbb{R}^{\,T(J+1)\times d}.
\end{equation}

\begin{figure}[t]
\centering
\includegraphics[width=\linewidth]{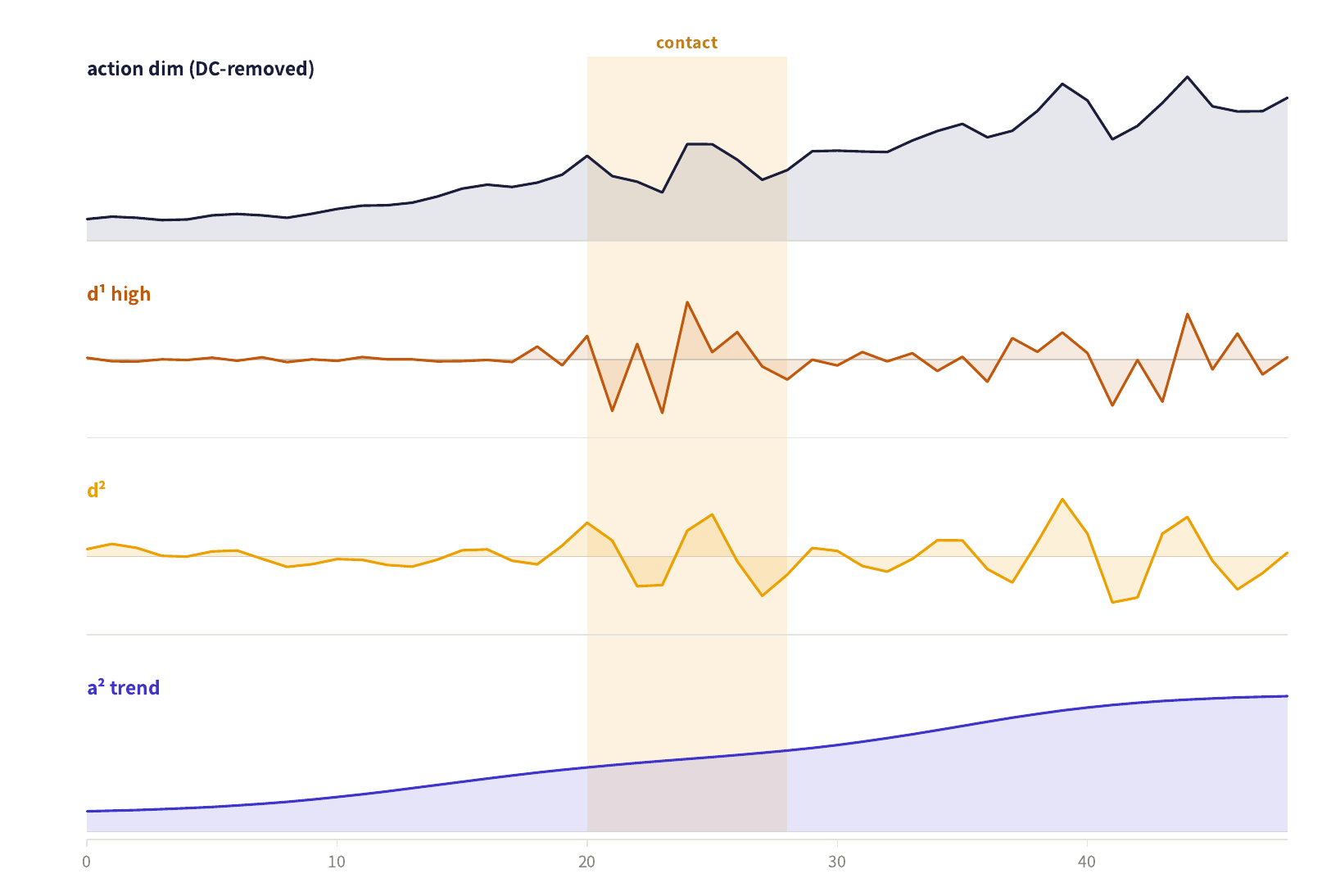}
\caption{A per-dimension SWT (\texttt{db2}) decomposes a single action-dimension sequence (top, DC-removed per dimension) into multi-resolution bands. High-frequency jitter at the contact moment (shaded) concentrates in $d^{(1)}$, while the final approximation $a^{(2)}$ retains the smooth trend---trend and correction are decoupled into different bands.}
\label{fig:swt}
\end{figure}

\subsection{Dot-Plus-Wedge Cross-Attention}
\label{subsec:attn}

TFGCA's attention is a \emph{cross}-attention: queries are the time tokens $H$, and keys/values are the frequency tokens $F$. Each time token thus retrieves evidence from the whole chunk's time--frequency tokens (each token jointly encoding multi-dimensional action content at one time--band position). With multi-head projections $Q=W_Q H$, $K=W_K F$, $V=W_V F$ (head dimension $d_h$, scale $s = 1/\sqrt{d_h}$), we form two score matrices,

\begin{align}
S^{\mathrm{dot}}_{ij} &= s\,(q_i^\top k_j), \\
S^{\mathrm{wed}}_{ij} &= s\,\sqrt{\lVert q_i\rVert^2\lVert k_j\rVert^2 - (q_i^\top k_j)^2 } .
\end{align}

We convert each score matrix to attention weights with its own softmax and blend them with a learnable scalar. Let $\beta = \sigma(\ell)$ be a global learnable mixing weight (a single scalar logit $\ell$, initialized so $\beta = 0.5$). Then
\begin{equation}
A = (1-\beta)\,\mathrm{softmax}(S^{\mathrm{dot}}) \;+\; \beta\,\mathrm{softmax}(S^{\mathrm{wed}}).
\end{equation}
This two-softmax-then-blend form (rather than one softmax over a pre-summed score) keeps each channel a proper distribution and lets the model weight \emph{alignment} (dot) against \emph{orthogonality} (wedge) with the learnable $\beta$. The attended value is projected and added residually,
\begin{equation}
\tilde H = H + W_O\big(A\,V\big).
\end{equation}

\subsection{Identity at Initialization and Integration}
\label{subsec:init}

The output projection $W_O$ is zero-initialized, so at initialization $\tilde H = H$ and the module reproduces the base prediction exactly on attachment. This is the standard zero-initialized residual paradigm~\citep{controlnet2023,flamingo2022,lora2022,rezero2021}. So that the module can leave the identity map, we zero only $W_O$ and keep the value path (band embeddings and action-space projection) nonzero: the initial gradient $\nabla_{W_O}\mathcal{L}=\delta\,(AV)^\top$ is then generically nonzero, whereas zeroing $V$ too would trap the module at the identity (formal statement in Appendix~I). This preserves function \emph{at initialization only}---it eases drop-in but does \emph{not} guarantee the fine-tuned model beats the base (Section~\ref{subsec:robotwin} and Section~\ref{sec:discussion}). The added parameters are negligible (about $4.23$M, roughly $0.10\%$ of the $\pi_{0.5}$ backbone; breakdown in Appendix~G.2).

\subsection{Theoretical Properties and Mechanism}
\label{subsec:theory}

We record the property that formalizes the wedge channel; its proof is short and follows from the definitions.

\begin{proposition}[Alignment--orthogonality decomposition and order separation]
\label{prop:decomp}
For any nonzero $q,k\in\mathbb{R}^{d_h}$, $(q^\top k)^2+\lVert q\wedge k\rVert^2=\lVert q\rVert^2\lVert k\rVert^2$; after norm normalization, $\hat{s}=q^\top k/(\lVert q\rVert\lVert k\rVert)$ and $\hat{w}=\lVert q\wedge k\rVert/(\lVert q\rVert\lVert k\rVert)$ satisfy $\hat{s}^2+\hat{w}^2=1$ (with $\hat{s}=\cos\theta$ and $\hat{w}=\lvert\sin\theta\rvert$ for $\theta$ the angle between $q$ and $k$). Thus the dot and wedge scores are two \emph{orthogonal coordinates} on the unit circle of pairwise relatedness---the dot product reads only $\hat{s}$ and is blind to the $\hat{w}$ axis along which near-orthogonal relationships vary. Consequently there is an \emph{order separation}: fix $q\neq0$ and two equal-norm keys $\lVert k_a\rVert=\lVert k_b\rVert=\kappa$ with $q^\top k_a > q^\top k_b \ge 0$; then
\begin{itemize}
\item the \textbf{dot channel} under any monotone softmax has $A^{\mathrm{dot}}(k_a)>A^{\mathrm{dot}}(k_b)$---it must place more weight on the more-aligned $k_a$ and can never favor the more-orthogonal $k_b$;
\item the \textbf{wedge channel} has $\lVert q\wedge k_a\rVert<\lVert q\wedge k_b\rVert$ and hence $A^{\mathrm{wed}}(k_b)>A^{\mathrm{wed}}(k_a)$.
\end{itemize}
Therefore, once $\beta$ exceeds a threshold $\beta^\star=\Delta^{\mathrm{dot}}/(\Delta^{\mathrm{dot}}+\Delta^{\mathrm{wed}})\in(0,1)$ (with $\Delta^{\mathrm{dot}}=A^{\mathrm{dot}}(k_a)-A^{\mathrm{dot}}(k_b)>0$, $\Delta^{\mathrm{wed}}=A^{\mathrm{wed}}(k_b)-A^{\mathrm{wed}}(k_a)>0$), the blend $A=(1-\beta)A^{\mathrm{dot}}+\beta A^{\mathrm{wed}}$ favors the near-orthogonal key ($A(k_b)>A(k_a)$)---a ranking \emph{no dot-product score can realize}. Such a $\beta$ exists and is reachable by the learnable weight; not every $\beta>0$ flips the order.
\end{proposition}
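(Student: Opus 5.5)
The plan is to work directly from the definitions, in three stages: the identity, the two orderings, and the threshold for $\beta$.

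\emph{The identity.} The first claim is just the closed form of the wedge magnitude in the Preliminaries rearranged: $(q^\top k)^2+\lVert q\wedge k\rVert^2=\lVert q\rVert^2\lVert k\rVert^2$. For nonzero $q,k$, divide by $\lVert q\rVert^2\lVert k\rVert^2$ to get $\hat s^2+\hat w^2=1$. By Cauchy--Schwarz, $\hat s\in[-1,1]$, so $\hat s=\cos\theta$ for a unique $\theta\in[0,\pi]$. Then $\hat w=\sqrt{1-\cos^2\theta}=\lvert\sin\theta\rvert$, where the nonnegative root is taken because $\lVert q\wedge k\rVert\ge0$. The ``orthogonal coordinates'' phrasing is interpretation only and needs no further proof.

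\emph{The two orderings.} Fix $q\neq0$ and let $\lVert k_a\rVert=\lVert k_b\rVert=\kappa$.
\begin{itemize}
\item \textbf{Dot channel.} Here $S^{\mathrm{dot}}(k_a)=s\,q^\top k_a>s\,q^\top k_b=S^{\mathrm{dot}}(k_b)$ because $s>0$. Both keys share the same softmax denominator, and $\exp$ is strictly increasing, so $A^{\mathrm{dot}}(k_a)>A^{\mathrm{dot}}(k_b)$. The same argument covers any strictly monotone normalization.
\item \textbf{Wedge channel.} Since $q^\top k_a>q^\top k_b\ge0$, squaring preserves the order, so $(q^\top k_a)^2>(q^\top k_b)^2$. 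The norms are equal, so the identity gives $\lVert q\wedge k_a\rVert^2<\lVert q\wedge k_b\rVert^2$. The square root is monotone, so the inequality also holds for the magnitudes. Applying the same shared-denominator softmax argument to $S^{\mathrm{wed}}$ yields $A^{\mathrm{wed}}(k_b)>A^{\mathrm{wed}}(k_a)$.
\end{itemize}
The hypothesis $q^\top k_b\ge0$ is used only in the squaring step, and it is essential there.

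\emph{The threshold.} Set $\Delta^{\mathrm{dot}}$ and $\Delta^{\mathrm{wed}}$ as in the statement; both are positive by the previous stage. Then
\[
A(k_b)-A(k_a)=-(1-\beta)\Delta^{\mathrm{dot}}+\beta\Delta^{\mathrm{wed}}=\beta(\Delta^{\mathrm{dot}}+\Delta^{\mathrm{wed}})-\Delta^{\mathrm{dot}}.
\]
This is positive exactly when $\beta>\beta^\star=\Delta^{\mathrm{dot}}/(\Delta^{\mathrm{dot}}+\Delta^{\mathrm{wed}})$, and $\beta^\star\in(0,1)$ because both $\Delta$'s are positive.

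For reachability, note that $\beta=\sigma(\ell)$ ranges over all of $(0,1)$ as $\ell$ ranges over $\mathbb{R}$. So any value in $(\beta^\star,1)$ is attained by choosing $\ell>\log\bigl(\beta^\star/(1-\beta^\star)\bigr)$. Conversely, any $\beta\in(0,\beta^\star]$ keeps $A(k_a)\ge A(k_b)$, which shows that not every $\beta>0$ flips the order.

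The claim that ``no dot-product score can realize'' this ranking follows from the dot-channel bullet, since that ranking holds for every scale $s>0$. I would state this scope explicitly: the claim concerns these fixed $q,k$ with equal-norm keys. It says nothing about learned reparametrizations that change the keys, consistent with the paper's scope remark.

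\emph{Main obstacle.} There is no deep obstacle; the care needed is bookkeeping. First, the softmax comparisons must be made within a single query row, so that the keys share the normalizing denominator. Second, the sign hypothesis must be invoked before squaring. Third, the strictness of the $\Delta$'s must be tracked, since it is what places $\beta^\star$ strictly inside $(0,1)$.
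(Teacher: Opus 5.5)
Your proposal is correct and follows the paper's proof essentially step for step: the Lagrange identity followed by normalization, then strict monotonicity of the softmax (with a shared denominator) in each channel. It also matches the paper's squaring step under $q^\top k_b\ge 0$ with equal key norms, and derives the same threshold from $A(k_b)-A(k_a)=\beta\Delta^{\mathrm{wed}}-(1-\beta)\Delta^{\mathrm{dot}}$. Your reachability argument via $\ell>\log\bigl(\beta^\star/(1-\beta^\star)\bigr)$ is slightly cleaner than the paper's appeal to $\beta=1$, a value that $\sigma(\ell)$ never attains exactly.
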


\begin{proof}
The first identity is the Lagrange (Cauchy--Schwarz) identity; dividing by $\lVert q\rVert^2\lVert k\rVert^2$ gives $\hat{s}^2+\hat{w}^2=1$. The two per-channel orderings follow from monotonicity of the softmax and, for the wedge channel, from $\lVert q\wedge k\rVert^2=\lVert q\rVert^2\kappa^2-(q^\top k)^2$ with $(q^\top k_a)^2>(q^\top k_b)^2$. For the blend, $A(k_b)-A(k_a)=\beta\,\Delta^{\mathrm{wed}}-(1-\beta)\,\Delta^{\mathrm{dot}}$, which is positive iff $\beta>\Delta^{\mathrm{dot}}/(\Delta^{\mathrm{dot}}+\Delta^{\mathrm{wed}})=\beta^\star$; since $\Delta^{\mathrm{dot}},\Delta^{\mathrm{wed}}>0$ we have $\beta^\star\in(0,1)$, and at $\beta=1$ the blend equals the wedge channel, which already flips the order.
\end{proof}

\paragraph{Representational-complexity view.} $\lVert q\wedge k\rVert^2$ is a quadratic form in $(q,k)$; a dot product can reproduce it only after a quadratic feature lift $\phi(x)=\mathrm{vec}(xx^\top)$ of dimension up to $O(d_h^2)$, whereas the wedge computes it in closed form at $O(d_h)$ via Cauchy--Schwarz.

\paragraph{Scope of the claim.} This separation holds at a \emph{fixed representation}. Because $Q,K$ come from learnable projections, a deep network can in principle relearn features that re-encode orthogonal relationships as aligned ones and route them through the dot channel, which is why existing dot-product policies work. Proposition~\ref{prop:decomp} is therefore not an impossibility result for dot-product models; it states that the wedge supplies this ordering \emph{directly, without relearning features or a quadratic lift}, a low-cost inductive bias whose payoff on real tasks is answered empirically in Section~\ref{sec:exp} (Figure~\ref{fig:toy} gives a $d_h{=}2$ instance).

\paragraph{Mechanism.} Figure~\ref{fig:toy} visualizes Proposition~\ref{prop:decomp} on a three-phase toy example: after the softmax the dot and wedge channels put their weight on opposite (aligned vs.\ orthogonal) keys, and the learnable $\beta$ interpolates between them. From this we \emph{hypothesize} that cross-phase correlations within a chunk often fall along near-orthogonal directions the dot product misses and that the wedge can score them directly---supported by the TO-DoL analysis of Section~\ref{sec:intro}. Its real benefit is evaluated empirically (Section~\ref{sec:exp}); the causal attribution ($Q/K$ routing analysis and OOD ablations) is left to future work (Section~\ref{sec:discussion}).

\begin{figure}[t]
\centering
\includegraphics[width=\linewidth]{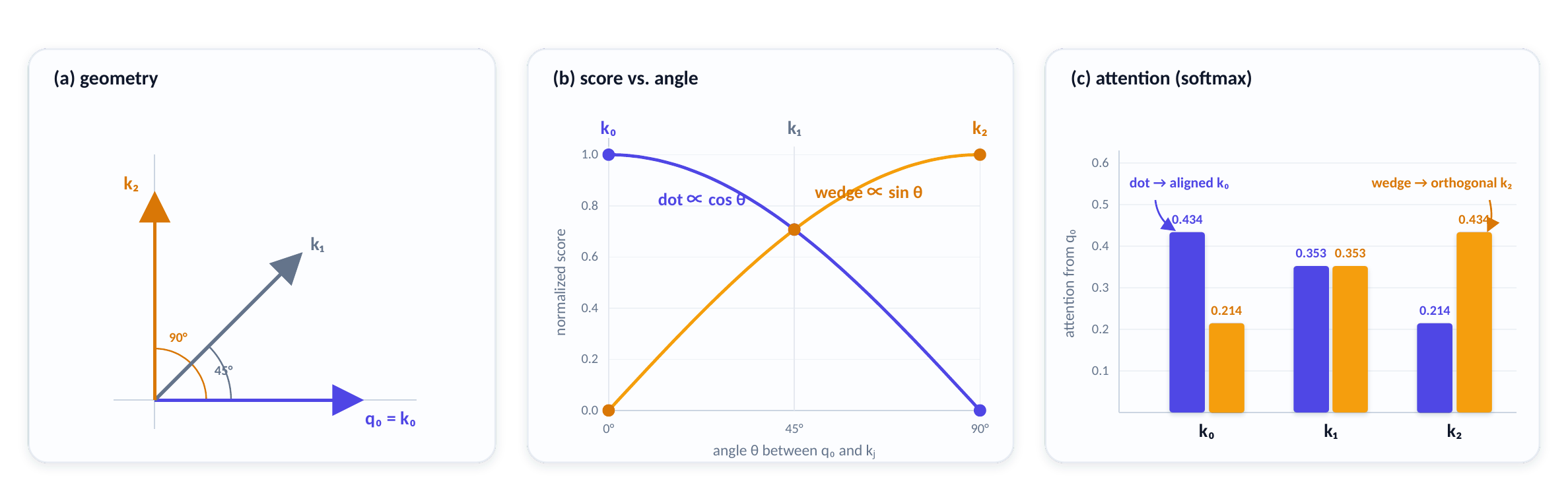}
\caption{Toy example ($d_h{=}2$, $q_0$ along axis A). \textbf{(a)}~the three keys make angles $0^\circ/45^\circ/90^\circ$ with $q_0$; \textbf{(b)}~normalized scores versus angle---$\mathrm{dot}\propto\cos\theta$ (falls with angle), $\mathrm{wedge}\propto\sin\theta$ (rises with angle), crossing at $45^\circ$, exactly the $\hat{s}^2+\hat{w}^2=1$ of Proposition~\ref{prop:decomp}; \textbf{(c)}~after the softmax the two attention distributions are \emph{flipped}: the dot channel gives its weight to the aligned $k_0$ (0.434), the wedge channel to the orthogonal $k_2$ (0.434).}
\label{fig:toy}
\end{figure}

\section{Experiments}
\label{sec:exp}

We evaluate TFGCA in four settings: LIBERO~\citep{libero2023} (Section~\ref{subsec:libero}, full-suite in-distribution performance and component ablations), LIBERO-Plus~\citep{liberoplus2025} (Section~\ref{subsec:liberoplus}, OOD generalization), RoboTwin 2.0~\citep{robotwin2025} (Section~\ref{subsec:robotwin}, single-task, clean and domain-randomized), and the AgiBot A2 real robot (Section~\ref{subsec:agibot}). Dataset sources, per-benchmark training and evaluation hyperparameters (optimizer, learning rate, chunk length, SWT levels, alignment-loss weight, etc.), and the training hardware and wall-clock times are all listed in Appendices~A, B, and~G; on each benchmark TFGCA and its same-source reproduced $\pi_{0.5}$ base share consistent data, training budget, and evaluation, differing only in the TFGCA-specific SWT/geometry configuration.

The empirical emphasis across these four settings is TFGCA's \textbf{robustness under out-of-distribution (OOD) conditions}: beyond the near-saturated in-distribution LIBERO, evaluation covers the unseen perturbations of LIBERO-Plus, RoboTwin domain randomization, and the real AgiBot A2 robot. We report each setting separately.

\subsection{LIBERO: Full-Suite Experiments}
\label{subsec:libero}

\paragraph{Setup.} We evaluate TFGCA attached to $\pi_{0.5}$ on the four LIBERO task suites (Spatial, Object, Goal, Long/L10), against the $\pi_{0.5}$ base (reported value and our reproduction); a broader comparison with external methods is deferred to Appendix~C. Because the base success rate is already high ($\sim$97\% average), LIBERO is a near-saturated in-distribution setting. LIBERO is also single-arm (end-effector-delta control), so the specific bimanual time-axis division-of-labor of TO-DoL (Section~\ref{sec:intro}) does not arise here; even so, the wedge remains a general scoring channel sensitive to the near-orthogonal components the dot product misses, and can still supply complementary information in-distribution. The full model is weakly but consistently best on the four-suite average below, though at LIBERO's saturation these gaps are small and we do not read them as isolating the wedge's effect. Both the ablations and the full TFGCA are trained on our reproduced $\pi_{0.5}$. Table~\ref{tab:libero} reports the full TFGCA together with per-component ablations: ``w/o~SWT'' removes the wavelet frequency branch (C1); ``w/o~geometry'' removes the wedge channel, reducing the attention to pure dot product (C2); ``w/o~alignment'' disables the optional alignment loss of Section~\ref{subsec:actionproj}.

\begin{table}[t]
\centering
\small
\setlength{\tabcolsep}{4pt}
\begin{tabular}{lccccc}
\toprule
Method & Spatial & Object & Goal & Long & Avg \\
\midrule
$\pi_{0.5}$ (reported) & 98.8 & 98.2 & 98.0 & 92.4 & 96.9 \\
$\pi_{0.5}$ (repro.) & 95.2 & 99.6 & 97.2 & 94.6 & 96.7 \\
\textbf{TFGCA (full)} & \textbf{98.5} & \textbf{99.4} & \textbf{97.7} & \textbf{97.0} & \textbf{98.2} \\
\quad w/o SWT & 96.8 & 99.1 & 97.1 & 97.8 & 97.7 \\
\quad w/o geometry & 97.2 & 99.7 & 97.6 & 96.5 & 97.8 \\
\quad w/o alignment & 94.9 & 99.5 & 98.5 & 95.5 & 97.1 \\
\bottomrule
\end{tabular}
\caption{LIBERO success rate (SR\%) on the four suites and the average. TFGCA and the reproduced $\pi_{0.5}$ base are run under the same protocol; the $\pi_{0.5}$ reference is the reported value~\citep{pi05_2025}. A broader comparison against 11 external methods (DP, OpenVLA, SpatialVLA, CoT-VLA, $\pi_0$/-FAST, GR00T, OpenVLA-OFT, Fast-WAM, X-VLA) is given in Appendix~C.}
\label{tab:libero}
\end{table}

\paragraph{On near-saturated LIBERO, TFGCA still delivers a consistent average gain.} Relative to our reproduced $\pi_{0.5}$ base (96.7\% average), full TFGCA lifts the four-suite average to \textbf{98.2\%}---a $+1.5$ net gain despite only ${\sim}3$ points of headroom, and it raises the hardest suite (Long) from 94.6 to \textbf{97.0}; the primary evidence for OOD robustness comes from the later evaluations. The component ablation illustrates the joint necessity of the three components in-distribution: removing the wavelet frequency branch (w/o~SWT), the wedge channel (w/o~geometry), or the alignment loss (w/o~alignment) each lowers the \emph{four-suite average} (to 97.7, 97.8, and 97.1 respectively), with w/o~alignment dropping the most.

\subsection{LIBERO-Plus: OOD Generalization}
\label{subsec:liberoplus}

\paragraph{Setup.} LIBERO-Plus applies seven families of out-of-distribution perturbations to LIBERO tasks---camera viewpoint (Camera), robot initialization (Robot), language rephrasing (Language), lighting (Light), background (Background), sensor noise (Noise), and object layout (Layout)---reporting a success rate per family and a Total. All policies are trained only on clean (unperturbed) LIBERO and then tested under each perturbation, so LIBERO-Plus measures OOD generalization to unseen perturbations. We compare TFGCA attached to $\pi_{0.5}$ against the same-source reproduced base, both self-tested under the same protocol (Table~\ref{tab:liberoplus}).

\begin{table*}[t]
\centering
\small
\setlength{\tabcolsep}{5pt}
\begin{tabular}{lcccccccc}
\toprule
Method & Camera & Robot & Language & Light & Background & Noise & Layout & Total \\
\midrule
$\pi_{0.5}$ (our reproduced base) & 42.1 & 62.9 & 77.2 & 96.3 & 88.4 & 38.4 & 79.6 & 66.7 \\
\textbf{TFGCA (ours, 30k)} & 51.5 & 73.4 & 81.8 & 96.4 & 87.8 & 52.3 & 81.7 & \textbf{73.0} \\
\quad w/o SWT & 46.7 & 74.2 & 80.7 & 93.8 & 87.4 & 54.2 & 82.4 & 72.1 \\
\quad w/o geometry & 41.1 & 67.9 & 81.0 & 94.6 & 87.1 & 49.8 & 83.4 & 69.7 \\
\quad w/o alignment & 49.6 & 70.5 & 79.1 & 97.4 & 89.1 & 51.1 & 82.7 & 71.9 \\
\bottomrule
\end{tabular}
\caption{LIBERO-Plus success rate (\%) across seven perturbation families and the Total. All rows are self-tested under the same protocol: our reproduced same-source base, the TFGCA-augmented model, and its three component ablations (w/o SWT, w/o geometry, w/o alignment; cf.\ Table~\ref{tab:libero}).}
\label{tab:liberoplus}
\end{table*}

\paragraph{On LIBERO-Plus, TFGCA improves broadly over its same-source base.} Under the same protocol, TFGCA beats the reproduced base (66.7) on six of the seven perturbations, and the Total rises from 66.7 to 73.0 (\textbf{$+6.3$}), with the gains concentrated on the base's hardest families---Noise, Robot, and Camera (per-family in Table~\ref{tab:liberoplus}; the full per-suite $\times$ per-perturbation results are in Appendix~E). The component ablation mirrors LIBERO but with a different ordering under OOD: removing the wedge channel (w/o~geometry) drops the Total the most (to 69.7, $-3.3$), ahead of w/o~alignment (71.9) and w/o~SWT (72.1)---the reverse of the near-saturated in-distribution case (Section~\ref{subsec:libero}), where alignment mattered most, consistent with the geometry channel contributing more precisely under out-of-distribution perturbations.

\subsection{RoboTwin: Single-Task (Clean and Randomized)}
\label{subsec:robotwin}

\paragraph{Setup.} All policies are trained only on RoboTwin 2.0 \textbf{clean (no domain randomization)} demonstrations for \textbf{30k steps}, then tested separately in \textbf{clean} and \textbf{randomized} environments. Since randomization is never seen during training, the randomized column measures \textbf{zero-shot generalization} to unseen perturbations. Table~\ref{tab:robotwin} compares TFGCA (attached to $\pi_{0.5}$) against the $\pi_{0.5}$ base under the same protocol; a full comparison with RoboTwin leaderboard baselines (DP, ACT, DP3, RDT, $\pi_0$) is in Appendix~D.

\begin{table}[t]
\centering
\small
\setlength{\tabcolsep}{5pt}
\begin{tabular}{lcc}
\toprule
Task & $\pi_{0.5}$ & \textbf{TFGCA (ours)} \\
\midrule
open\_microwave     & 93 / 27  & 88 / 36 \\
stamp\_seal         & 15 / 5   & 19 / 3 \\
handover\_block     & 57 / 13  & 72 / 11 \\
turn\_switch        & 41 / 28  & 54 / 61 \\
stack\_bowls\_three & 83 / 6   & 74 / 59 \\
click\_bell         & 79 / 6   & 83 / 86 \\
\midrule
Average & 61.3 / 14.2 & \textbf{65.0 / 42.7} \\
\bottomrule
\end{tabular}
\caption{RoboTwin 2.0 success rate (\%); each cell is \textbf{clean / randomized}. $\pi_{0.5}$ and TFGCA (ours) are run under the same protocol (Aloha-AgileX, 30k steps, 100 evaluations each). The last row is the 6-task average. A full comparison with leaderboard baselines (DP, ACT, DP3, RDT, $\pi_0$) is given in Appendix~D.}
\label{tab:robotwin}
\end{table}

\paragraph{TFGCA improves the base on both clean and randomized, and more so under OOD.} Relative to $\pi_{0.5}$, TFGCA improves the six-task mean by \textbf{$+3.7$} on clean and \textbf{$+28.5$} on randomized (randomization is never trained on, so the randomized column measures zero-shot generalization). Its randomized average of 42.7 is also the highest among all methods in the Appendix~D comparison---about 24 points above the next best---while non-pretrained baselines collapse almost entirely under randomization. The most dramatic are click\_bell and stack\_bowls\_three: $\pi_{0.5}$ nearly collapses under randomization ($\approx$6\%), whereas TFGCA holds \textbf{86\%} and \textbf{59\%}.

\paragraph{The largest randomized gains are on contact/multi-stage tasks---a correlation, not causal proof.} These tasks (click\_bell, stack\_bowls\_three, turn\_switch) are consistent in direction with the Section~\ref{subsec:theory} motivation, but this is only a task-level correlation---handover\_block dips from 13 to 11 on randomized, within the binomial confidence interval at $n{=}100$ and better read as noise. The only two clean regressions (open\_microwave, stack\_bowls\_three) also occur on near-saturated tasks and still show clear randomized gains, so they look like saturation noise. Component net contributions are assessed by the Section~\ref{subsec:libero} ablation.

\subsection{AgiBot A2: Real-Robot Validation}
\label{subsec:agibot}

We further conduct a real-robot evaluation on AgiBot A2, comparing TFGCA against the $\pi_{0.5}$ base on three manipulation tasks (Table~\ref{tab:agibot}). Each task is run 20 times, for 60 trials in total; the phase decomposition, success criteria, and test protocol of each task are given in Appendix~F.

\begin{table}[t]
\centering
\footnotesize
\setlength{\tabcolsep}{2pt}
\begin{tabular}{lccc}
\toprule
Task & TFGCA & $\pi_{0.5}$ & $\Delta$ \\
\midrule
Soap into box & 6/20 (30\%)  & 4/20 (20\%)  & +10 pts \\
Plush toys into basket & 19/20 (95\%) & 17/20 (85\%) & +10 pts \\
Pull a tissue      & 12/20 (60\%) & 9/20 (45\%)  & +15 pts \\
\midrule
\textbf{Overall} & \textbf{37/60 (61.67\%)} & \textbf{30/60 (50.00\%)} & \textbf{+11.67 pts} \\
\bottomrule
\end{tabular}
\caption{AgiBot A2 real-robot success rate.}
\label{tab:agibot}
\end{table}

Across the three real-robot tasks, TFGCA attains an overall success rate of \textbf{61.67\%}, above the $\pi_{0.5}$ base's \textbf{50.00\%}, an improvement of \textbf{11.67 percentage points}. All three tasks improve, with the largest gain on the pull-a-tissue-from-the-box task ($+15$ points). This indicates that TFGCA improves multi-stage manipulation and object interaction on a real robot, beyond the simulation benchmarks.

\section{Discussion and Limitations}
\label{sec:discussion}

\paragraph{Boundary of the evidence.} Identity initialization (Section~\ref{subsec:init}) only guarantees that the module matches the base pointwise \emph{at initialization}---a statement about attachment safety, not a theoretical guarantee that the jointly fine-tuned model beats the base. Effectiveness is instead established empirically: across all four settings TFGCA improves over the same-source base in a consistent direction (Section~\ref{sec:exp}), and the occasional small regressions on near-saturated tasks (Section~\ref{subsec:robotwin}) do not change this overall conclusion.

\paragraph{Limitations.} \emph{The frequency framing is weaker at short horizons}: LIBERO's $T=10$ and single SWT level make the decomposition close to a single trend/detail split, so the frequency narrative is more solid on the $T=50$, two-level, second-order-difference RoboTwin setting.

\paragraph{Future work.} First, \emph{linking TO-DoL to the wedge channel}: a $Q/K$ routing analysis can test whether the wedge really moves attention mass onto the cross-phase token pairs with ``small dot but large wedge,'' and, paired with a dot-only ablation under OOD, could upgrade the mechanism from a motivating hypothesis to evidence. Second, \emph{task-dependent inference frequency}: how often a chunked policy should re-plan appears to interact with a task's reactivity, and the module's frequency view may help predict a good setting. Third, \emph{chunk-boundary diagnostics}: chunked policies jump where consecutive chunks disagree, and a boundary-discontinuity metric (e.g., spectral arc length in kinematics~\citep{smoothness2015}) would make this observable and connect the frequency view to a concrete smoothness quantity.

\section{Conclusion}

Chunked VLA policies under-use the structure of the trajectories they emit, missing both the multi-scale frequency content of motion and the cross-phase, near-orthogonal relationships within a chunk that dot-product attention can retrieve only \emph{indirectly} (it is least sensitive exactly near orthogonality and must re-encode such relationships as similarity). Time--Frequency Geometric Cross-Attention addresses both with a single drop-in module: a per-dimension control-space wavelet tokenization of the action chunk (jointly encoding the multi-dimensional action content at each time--band position), and a cross-attention in which each time token retrieves information from the whole chunk's time--frequency tokens through a learnable blend of similarity and geometric difference. The module uses a zero-initialized residual, so it reproduces the base output exactly at initialization, easing drop-in onto a pretrained policy and joint fine-tuning with it. Empirically, TFGCA raises overall success on RoboTwin and improves more under domain randomization; on LIBERO it yields an average gain on a near-saturated benchmark, and on its OOD variant LIBERO-Plus it improves the Total by $+6.3$ over the same-source base; on the real AgiBot A2 robot the overall success rate rises from \textbf{50.00\%} to \textbf{61.67\%}. These gains are consistent in direction under varying degrees of out-of-distribution conditions.

\newpage
\bibliography{arxiv}

\begin{thebibliography}{33}
\providecommand{\natexlab}[1]{#1}

\bibitem[{Alayrac et~al.(2022)Alayrac, Donahue, Luc, Miech, Barr, Hasson, Lenc, Mensch, Millican, Reynolds et~al.}]{flamingo2022}
Alayrac, J.-B.; Donahue, J.; Luc, P.; Miech, A.; Barr, I.; Hasson, Y.; Lenc, K.; Mensch, A.; Millican, K.; Reynolds, M.; et~al. 2022.
\newblock {Flamingo}: A Visual Language Model for Few-Shot Learning.
\newblock In \emph{Advances in Neural Information Processing Systems (NeurIPS)}.

\bibitem[{Bachlechner et~al.(2021)Bachlechner, Majumder, Mao, Cottrell, and McAuley}]{rezero2021}
Bachlechner, T.; Majumder, B.~P.; Mao, H.~H.; Cottrell, G.~W.; and McAuley, J. 2021.
\newblock {ReZero} is All You Need: Fast Convergence at Large Depth.
\newblock In \emph{Conference on Uncertainty in Artificial Intelligence (UAI)}.

\bibitem[{Balasubramanian et~al.(2015)Balasubramanian, Melendez-Calderon, Roby-Brami, and Burdet}]{smoothness2015}
Balasubramanian, S.; Melendez-Calderon, A.; Roby-Brami, A.; and Burdet, E. 2015.
\newblock On the Analysis of Movement Smoothness.
\newblock \emph{Journal of NeuroEngineering and Rehabilitation}, 12(112).

\bibitem[{Black et~al.(2024)}]{black2024pi0}
Black, K.; et~al. 2024.
\newblock {$\pi_0$}: A Vision-Language-Action Flow Model for General Robot Control.
\newblock \emph{arXiv preprint arXiv:2410.24164}.

\bibitem[{Brandstetter et~al.(2023)Brandstetter, van~den Berg, Welling, and Gupta}]{clifford2023}
Brandstetter, J.; van~den Berg, R.; Welling, M.; and Gupta, J.~K. 2023.
\newblock Clifford Neural Layers for {PDE} Modeling.
\newblock In \emph{International Conference on Learning Representations (ICLR)}.

\bibitem[{Brehmer et~al.(2023)Brehmer, de~Haan, Behrends, and Cohen}]{gatr2023}
Brehmer, J.; de~Haan, P.; Behrends, S.; and Cohen, T. 2023.
\newblock Geometric Algebra Transformer.
\newblock In \emph{Advances in Neural Information Processing Systems (NeurIPS)}.

\bibitem[{Brohan et~al.(2023)}]{rt2_2023}
Brohan, A.; et~al. 2023.
\newblock {RT-2}: Vision-Language-Action Models Transfer Web Knowledge to Robotic Control.
\newblock In \emph{Conference on Robot Learning (CoRL)}.

\bibitem[{Chen et~al.(2025{\natexlab{a}})Chen, Luong, Mukherjee, and Singh}]{simpletm2025}
Chen, H.; Luong, V.; Mukherjee, L.; and Singh, V. 2025{\natexlab{a}}.
\newblock {SimpleTM}: A Simple Baseline for Multivariate Time Series Forecasting.
\newblock In \emph{International Conference on Learning Representations (ICLR)}.

\bibitem[{Chen et~al.(2025{\natexlab{b}})}]{robotwin2025}
Chen, T.; et~al. 2025{\natexlab{b}}.
\newblock {RoboTwin 2.0}: A Scalable Data Generator and Benchmark with Strong Domain Randomization for Robust Bimanual Robotic Manipulation.
\newblock \emph{arXiv preprint arXiv:2506.18088}.

\bibitem[{Chi et~al.(2023)Chi, Xu, Feng, Cousineau, Du, Burchfiel, Tedrake, and Song}]{diffusionpolicy2023}
Chi, C.; Xu, Z.; Feng, S.; Cousineau, E.; Du, Y.; Burchfiel, B.; Tedrake, R.; and Song, S. 2023.
\newblock Diffusion Policy: Visuomotor Policy Learning via Action Diffusion.
\newblock In \emph{Robotics: Science and Systems (RSS)}.
\newblock Extended in International Journal of Robotics Research 44(10--11):1684--1704, 2025.

\bibitem[{Fei et~al.(2025)Fei, Wang, Shi, Dai, Cai, Qian, Ji, He, Zhang, Fei, Fu, Gong, and Qiu}]{liberoplus2025}
Fei, S.; Wang, S.; Shi, J.; Dai, Z.; Cai, J.; Qian, P.; Ji, L.; He, X.; Zhang, S.; Fei, Z.; Fu, J.; Gong, J.; and Qiu, X. 2025.
\newblock {LIBERO-Plus}: In-depth Robustness Analysis of Vision-Language-Action Models.
\newblock \emph{arXiv preprint arXiv:2510.13626}.

\bibitem[{Hu et~al.(2022)Hu, Shen, Wallis, Allen-Zhu, Li, Wang, Wang, and Chen}]{lora2022}
Hu, E.~J.; Shen, Y.; Wallis, P.; Allen-Zhu, Z.; Li, Y.; Wang, S.; Wang, L.; and Chen, W. 2022.
\newblock {LoRA}: Low-Rank Adaptation of Large Language Models.
\newblock In \emph{International Conference on Learning Representations (ICLR)}.

\bibitem[{Kim, Finn, and Liang(2025)}]{openvlaoft2025}
Kim, M.~J.; Finn, C.; and Liang, P. 2025.
\newblock Fine-Tuning Vision-Language-Action Models: Optimizing Speed and Success.
\newblock In \emph{Robotics: Science and Systems (RSS)}.

\bibitem[{Kim et~al.(2024)}]{openvla2024}
Kim, M.~J.; et~al. 2024.
\newblock {OpenVLA}: An Open-Source Vision-Language-Action Model.
\newblock In \emph{Conference on Robot Learning (CoRL)}.

\bibitem[{Lipman et~al.(2023)Lipman, Chen, Ben-Hamu, Nickel, and Le}]{flowmatching2023}
Lipman, Y.; Chen, R. T.~Q.; Ben-Hamu, H.; Nickel, M.; and Le, M. 2023.
\newblock Flow Matching for Generative Modeling.
\newblock In \emph{International Conference on Learning Representations (ICLR)}.

\bibitem[{Liu et~al.(2023)Liu, Zhu, Gao, Feng, Liu, Zhu, and Stone}]{libero2023}
Liu, B.; Zhu, Y.; Gao, C.; Feng, Y.; Liu, Q.; Zhu, Y.; and Stone, P. 2023.
\newblock {LIBERO}: Benchmarking Knowledge Transfer for Lifelong Robot Learning.
\newblock In \emph{Advances in Neural Information Processing Systems (NeurIPS), Datasets and Benchmarks Track}.

\bibitem[{Liu et~al.(2025)}]{rdt2024}
Liu, S.; et~al. 2025.
\newblock {RDT-1B}: A Diffusion Foundation Model for Bimanual Manipulation.
\newblock In \emph{International Conference on Learning Representations (ICLR)}.

\bibitem[{Liu et~al.(2024)Liu, Hu, Zhang, Wu, Wang, Ma, and Long}]{itransformer2024}
Liu, Y.; Hu, T.; Zhang, H.; Wu, H.; Wang, S.; Ma, L.; and Long, M. 2024.
\newblock {iTransformer}: Inverted Transformers Are Effective for Time Series Forecasting.
\newblock In \emph{International Conference on Learning Representations (ICLR)}.

\bibitem[{{NVIDIA}(2025)}]{groot_n16_2026}
{NVIDIA}. 2025.
\newblock {GR00T N1.6}: An Improved Open Foundation Model for Generalist Humanoid Robots.
\newblock \url{https://research.nvidia.com/labs/gear/gr00t-n1_6/}.

\bibitem[{{NVIDIA} et~al.(2025){NVIDIA}, Bjorck, Casta{\~n}eda, Cherniadev, Da, Ding, Fan, Fang, Fox et~al.}]{groot_n1_2025}
{NVIDIA}; Bjorck, J.; Casta{\~n}eda, F.; Cherniadev, N.; Da, X.; Ding, R.; Fan, L.; Fang, Y.; Fox, D.; et~al. 2025.
\newblock {GR00T N1}: An Open Foundation Model for Generalist Humanoid Robots.
\newblock \emph{arXiv preprint arXiv:2503.14734}.

\bibitem[{{Octo Model Team}(2024)}]{octo2024}
{Octo Model Team}. 2024.
\newblock {Octo}: An Open-Source Generalist Robot Policy.
\newblock In \emph{Robotics: Science and Systems (RSS)}.

\bibitem[{Pertsch et~al.(2025)Pertsch, Stachowicz, Ichter, Driess, Nair, Vuong, Mees, Finn, and Levine}]{fast2025}
Pertsch, K.; Stachowicz, K.; Ichter, B.; Driess, D.; Nair, S.; Vuong, Q.; Mees, O.; Finn, C.; and Levine, S. 2025.
\newblock {FAST}: Efficient Action Tokenization for Vision-Language-Action Models.
\newblock \emph{arXiv preprint arXiv:2501.09747}.

\bibitem[{{Physical Intelligence} et~al.(2025){Physical Intelligence}, Black, Brown, Darpinian, Dhabalia, Driess, Esmail, Equi, Finn, Fusai et~al.}]{pi05_2025}
{Physical Intelligence}; Black, K.; Brown, N.; Darpinian, J.; Dhabalia, K.; Driess, D.; Esmail, A.; Equi, M.; Finn, C.; Fusai, N.; et~al. 2025.
\newblock {$\pi_{0.5}$}: A Vision-Language-Action Model with Open-World Generalization.
\newblock \emph{arXiv preprint arXiv:2504.16054}.

\bibitem[{Qu et~al.(2025)Qu, Song, Chen, Yao, Ye, Ding, Wang, Gu, Zhao, Wang et~al.}]{spatialvla2025}
Qu, D.; Song, H.; Chen, Q.; Yao, Y.; Ye, X.; Ding, Y.; Wang, Z.; Gu, J.; Zhao, B.; Wang, D.; et~al. 2025.
\newblock {SpatialVLA}: Exploring Spatial Representations for Visual-Language-Action Model.
\newblock In \emph{Robotics: Science and Systems (RSS)}.

\bibitem[{Wu et~al.(2021)Wu, Xu, Wang, and Long}]{autoformer2021}
Wu, H.; Xu, J.; Wang, J.; and Long, M. 2021.
\newblock {Autoformer}: Decomposition Transformers with Auto-Correlation for Long-Term Series Forecasting.
\newblock In \emph{Advances in Neural Information Processing Systems (NeurIPS)}.

\bibitem[{Yi et~al.(2023)}]{frets2023}
Yi, K.; et~al. 2023.
\newblock Frequency-Domain {MLP}s Are More Effective Learners in Time Series Forecasting.
\newblock In \emph{Advances in Neural Information Processing Systems (NeurIPS)}.

\bibitem[{Yuan et~al.(2026)Yuan, Dong, Liu, and Zhao}]{fastwam2026}
Yuan, T.; Dong, Z.; Liu, Y.; and Zhao, H. 2026.
\newblock {Fast-WAM}: Do World Action Models Need Test-Time Future Imagination?
\newblock \emph{arXiv preprint arXiv:2603.16666}.

\bibitem[{Ze et~al.(2024)Ze, Zhang, Zhang, Hu, Wang, and Xu}]{dp3_2024}
Ze, Y.; Zhang, G.; Zhang, K.; Hu, C.; Wang, M.; and Xu, H. 2024.
\newblock {3D} Diffusion Policy: Generalizable Visuomotor Policy Learning via Simple {3D} Representations.
\newblock In \emph{Robotics: Science and Systems (RSS)}.

\bibitem[{Zhang, Rao, and Agrawala(2023)}]{controlnet2023}
Zhang, L.; Rao, A.; and Agrawala, M. 2023.
\newblock Adding Conditional Control to Text-to-Image Diffusion Models.
\newblock In \emph{IEEE/CVF International Conference on Computer Vision (ICCV)}.

\bibitem[{Zhao et~al.(2025)Zhao, Lu, Kim, Fu, Zhang, Wu, Li, Ma, Han, Finn et~al.}]{cotvla2025}
Zhao, Q.; Lu, Y.; Kim, M.~J.; Fu, Z.; Zhang, Z.; Wu, Y.; Li, Z.; Ma, Q.; Han, S.; Finn, C.; et~al. 2025.
\newblock {CoT-VLA}: Visual Chain-of-Thought Reasoning for Vision-Language-Action Models.
\newblock In \emph{IEEE/CVF Conference on Computer Vision and Pattern Recognition (CVPR)}.

\bibitem[{Zhao et~al.(2023)Zhao, Kumar, Levine, and Finn}]{act2023}
Zhao, T.~Z.; Kumar, V.; Levine, S.; and Finn, C. 2023.
\newblock Learning Fine-Grained Bimanual Manipulation with Low-Cost Hardware.
\newblock In \emph{Robotics: Science and Systems (RSS)}.

\bibitem[{Zheng et~al.(2026)Zheng, Li, Wang, Liu, Kang, Feng, Zheng, Zou, Chen, Zeng et~al.}]{xvla2025}
Zheng, J.; Li, J.; Wang, Z.; Liu, D.; Kang, X.; Feng, Y.; Zheng, Y.; Zou, J.; Chen, Y.; Zeng, J.; et~al. 2026.
\newblock {X-VLA}: Soft-Prompted Transformer as Scalable Cross-Embodiment Vision-Language-Action Model.
\newblock In \emph{International Conference on Learning Representations (ICLR)}.

\bibitem[{Zhou et~al.(2022)Zhou, Ma, Wen, Wang, Sun, and Jin}]{fedformer2022}
Zhou, T.; Ma, Z.; Wen, Q.; Wang, X.; Sun, L.; and Jin, R. 2022.
\newblock {FEDformer}: Frequency Enhanced Decomposed Transformer for Long-Term Series Forecasting.
\newblock In \emph{International Conference on Machine Learning (ICML)}.

\end{thebibliography}

\clearpage
\noindent This document provides the dataset configurations, training and evaluation settings, resource usage, and the Temporal Orthogonal Division-of-Labor (TO-DoL) analysis that are omitted from the main paper for space. Unless otherwise noted, the simulated experiments use LeRobot as the common infrastructure for data organization, policy training, and rollout evaluation; TFGCA and the corresponding $\pi_{0.5}$ base share data processing, training budget, and evaluation conditions. Method definitions, theoretical properties, and the main results are given in the main paper and are not repeated here.

\appendix

\section{Datasets}

\paragraph{LIBERO.} The LIBERO experiments use the LeRobot Hugging Face dataset \texttt{lerobot/libero}, read and organized in the LeRobot data format. Evaluation covers the four standard suites LIBERO-Long, LIBERO-Goal, LIBERO-Object, and LIBERO-Spatial. Each sample contains two $256\times256$ RGB images, an 8-dimensional robot state, and a 7-dimensional action; that action is \emph{not} seven joint angles but a Cartesian end-effector increment (3-D translation, 3-D rotation) plus a 1-D binary gripper command. Images use identity normalization; states and actions use quantile normalization. Training and evaluation episodes follow the LeRobot dataset documentation and the standard LIBERO benchmark conventions.

\paragraph{RoboTwin 2.0.} The RoboTwin 2.0 experiments use the LeRobot Hugging Face dataset \texttt{lerobot/robotwin\_unified}, organized in the LeRobot format. Each sample contains three $480\times640$ RGB images, a 14-dimensional robot state, and a 14-dimensional joint-space action (dual-arm \texttt{qpos}). Policies are trained only on clean demonstrations and evaluated separately in clean and randomized environments, to probe in-distribution performance and generalization to unseen environmental perturbations. Episode setup follows the LeRobot dataset documentation and the RoboTwin benchmark conventions.

\paragraph{AgiBot A2.} The real-robot data were collected in-house and cover three tasks: placing soap into a soap box, placing two plush toys into a basket, and pulling a tissue from a tissue box. Owing to confidentiality constraints, we do not release the raw data or the collection procedure; public documentation of the A2 platform's software/hardware and developer interfaces is available from AgiBot's official developer materials.

\section{Training and Evaluation Configuration}

\paragraph{Common implementation.} All simulated training and evaluation are run in LeRobot, including the policy interface, data loading, feature normalization, action post-processing, and environment rollout. TFGCA is attached between the VLA transformer output and the action head, with base model \texttt{lerobot/pi05\_base}. Apart from TFGCA-specific settings, the augmented model and its $\pi_{0.5}$ base use the same data, training budget, and evaluation pipeline. The common training configuration is given in Table~\ref{tab:train}.

\begin{table}[t]
\centering
\footnotesize
\begin{tabular}{@{}lp{0.52\columnwidth}@{}}
\toprule
Setting & Value \\
\midrule
Base model & \texttt{lerobot/pi05\_base} \\
Optimizer & AdamW \\
Initial learning rate & $2.5\times10^{-5}$ \\
Adam betas & $(0.9, 0.95)$ \\
Adam epsilon & $10^{-8}$ \\
Weight decay & 0.01 \\
Gradient clipping & 1.0 \\
Warmup & 1{,}000 steps \\
LR schedule & cosine to $2.5\times10^{-6}$ at 30k steps \\
Numerical precision & bfloat16 \\
Memory optimization & gradient checkpointing \\
Random seed & 1000 \\
FM time sampling & $\mathrm{Beta}(1.5,1.0)$, scale 0.999, shift 0.001 \\
Inference denoising steps & 10 \\
Checkpoint interval & 1{,}000 steps \\
\bottomrule
\end{tabular}
\caption{Common training configuration shared by TFGCA and the $\pi_{0.5}$ base.}
\label{tab:train}
\end{table}

\paragraph{Per-benchmark configuration.} The training and evaluation configuration for the three simulation benchmarks (LIBERO, LIBERO-Plus, RoboTwin 2.0) is given in Table~\ref{tab:perbench}; other common hyperparameters are in Table~\ref{tab:train}. On each benchmark TFGCA and the corresponding reproduced $\pi_{0.5}$ base share consistent data, training budget, and evaluation, differing only in the TFGCA-specific SWT/geometry configuration. The action-space alignment loss uses the ground-truth action as its target with a stop-gradient on the target branch. The difference order counts the per-dimension sequence out of \texttt{action\_proj} as order 0: LIBERO and LIBERO-Plus feed the SWT directly, while RoboTwin 2.0 applies two finite differences before the SWT. Success rates are computed as the fraction of rollouts that strictly complete the task. \textbf{LIBERO-Plus is not trained separately}: the LIBERO-trained models are evaluated on its out-of-distribution perturbations to measure zero-shot generalization; the construction, parameter ranges, and success criteria of the seven perturbation families (camera viewpoint, robot initialization, language, lighting, background, sensor noise, object layout) follow the official LIBERO-Plus setting~\citep{liberoplus2025}.

\begin{table}[t]
\centering
\footnotesize
\setlength{\tabcolsep}{3pt}
\begin{tabular}{@{}p{0.26\columnwidth}ccc@{}}
\toprule
Setting & LIBERO & LIBERO-Plus & RoboTwin \\
\midrule
Training data & \texttt{libero} & reuse & \texttt{robotwin} \\
Training steps & 30k & --- & 30k/task \\
Batch (per GPU / global) & 32 / 64 & --- & 32 / 64 \\
Chunk / exec.\ length & 10 / 10 & 10 / 10 & 50 / 50 \\
SWT levels & 1 & 1 & 2 \\
Diff.\ order before SWT & 0 & 0 & 2 \\
Mother wavelet & \texttt{db2} & \texttt{db2} & \texttt{db2} \\
Attention heads & 8 & 8 & 8 \\
Alignment-loss weight & 0.01 & 0.01 & 0.1 \\
Rollouts / condition & conv. & official & 100 each \\
\bottomrule
\end{tabular}
\caption{Per-benchmark training and evaluation configuration. Filters are learnable in all cases (initialized to \texttt{db2}). ``reuse'' means LIBERO-Plus reuses the LIBERO-trained model without separate training; ``30k/task'' means RoboTwin is trained 30k steps per task; ``conv.'' follows the LIBERO benchmark convention for rollouts per condition.}
\label{tab:perbench}
\end{table}

\paragraph{How the TFGCA-specific settings are chosen.} We do not run a large hyperparameter sweep; the few TFGCA-specific settings in Table~\ref{tab:perbench} are fixed from the data and the base configuration rather than tuned per benchmark. The chunk and execution length follow the LeRobot base configuration for each benchmark (10 for LIBERO/LIBERO-Plus, 50 for RoboTwin). The number of SWT levels is tied to the chunk length---one level for the short LIBERO chunk, two for the longer RoboTwin chunk, so that the coarsest band still covers a meaningful fraction of the chunk. The difference order before the SWT is set from the energy distribution of the action sequence: LIBERO/LIBERO-Plus end-effector deltas are already near-stationary and are fed to the SWT directly (order 0), whereas RoboTwin joint-space \texttt{qpos} carries a strong low-frequency drift, so we apply two finite differences (order 2) to move its energy into the detail bands. The alignment-loss weight is $0.01$ in the multi-task simulation settings (LIBERO, LIBERO-Plus) and $0.1$ in the single-task RoboTwin setting, where a stronger alignment target is affordable without competing across tasks. The remaining values (optimizer, learning rate, schedule, batch size, denoising steps, seed) are shared with the $\pi_{0.5}$ base and are not re-tuned for TFGCA.

\section{LIBERO: Extended Comparison}

Table~1 of the main paper reports TFGCA, the reproduced $\pi_{0.5}$ base, and the ablations under a single protocol. Table~\ref{tab:liberofull} places these alongside external methods for context. As these values come from the original papers or recent public comparisons under possibly different evaluation protocols, they are provided \emph{for reference only} and should not be read as a strict ranking against TFGCA.

\begin{table*}[t]
\centering
\small
\setlength{\tabcolsep}{4pt}
\begin{tabular}{lccccc}
\toprule
Method & Spatial & Object & Goal & Long & Avg \\
\midrule
Diffusion Policy~\citep{diffusionpolicy2023} & 78.5 & 87.5 & 73.5 & 64.8 & 76.1 \\
OpenVLA~\citep{openvla2024} & 84.7 & 88.4 & 79.2 & 53.7 & 76.5 \\
SpatialVLA~\citep{spatialvla2025} & 88.2 & 89.9 & 78.6 & 55.5 & 78.1 \\
CoT-VLA~\citep{cotvla2025} & 87.5 & 91.6 & 87.6 & 69.0 & 83.9 \\
$\pi_0$-FAST~\citep{fast2025} & 96.4 & 96.8 & 88.6 & 60.2 & 85.5 \\
GR00T-N1~\citep{groot_n1_2025} & 94.4 & 97.6 & 93.0 & 90.6 & 93.9 \\
$\pi_0$~\citep{black2024pi0} & 98.0 & 96.8 & 94.4 & 88.4 & 94.4 \\
$\pi_{0.5}$~\citep{pi05_2025} & 98.8 & 98.2 & 98.0 & 92.4 & 96.9 \\
GR00T-N1.6~\citep{groot_n16_2026} & 97.7 & 98.5 & 97.5 & 94.4 & 97.0 \\
OpenVLA-OFT~\citep{openvlaoft2025} & 97.6 & 98.4 & 97.9 & 94.5 & 97.1 \\
Fast-WAM~\citep{fastwam2026} & 98.2 & 100.0 & 97.0 & 95.2 & 97.6 \\
X-VLA~\citep{xvla2025} & 98.2 & 98.6 & 97.8 & 97.6 & 98.1 \\
\midrule
$\pi_{0.5}$ (our repro.) & 95.2 & 99.6 & 97.2 & 94.6 & 96.7 \\
\textbf{TFGCA (full)} & \textbf{98.5} & \textbf{99.4} & \textbf{97.7} & \textbf{97.0} & \textbf{98.2} \\
TFGCA $-$ SWT & 96.8 & 99.1 & 97.1 & 97.8 & 97.7 \\
TFGCA $-$ geometry & 97.2 & 99.7 & 97.6 & 96.5 & 97.8 \\
TFGCA $-$ alignment & 94.9 & 99.5 & 98.5 & 95.5 & 97.1 \\
\bottomrule
\end{tabular}
\caption{LIBERO success rate (SR\%), TFGCA and $\pi_{0.5}$ (our reproduction and ablations, same protocol) alongside external methods (upper block, reference only---protocols may differ).}
\label{tab:liberofull}
\end{table*}

\section{RoboTwin 2.0: Full Comparison}

Table~3 of the main paper compares TFGCA against the $\pi_{0.5}$ base under a single protocol. Table~\ref{tab:robotwinfull} places these alongside the RoboTwin 2.0 official leaderboard baselines. $\pi_{0.5}$ and TFGCA (ours) are run under the same protocol (Aloha-AgileX, 30k steps, 100 evaluations each) and are directly comparable; DP, ACT, DP3, RDT, and $\pi_0$ are the leaderboard-reported values. DP3 uses point-cloud input, a different modality from the other RGB methods.

\begin{table*}[t]
\centering
\small
\setlength{\tabcolsep}{5pt}
\begin{tabular}{lccccccc}
\toprule
Task & DP & ACT & DP3 & RDT & $\pi_0$ & $\pi_{0.5}$ & \textbf{TFGCA (ours)} \\
\midrule
open\_microwave    & 5 / 0   & 86 / 0  & 61 / 22 & 37 / 20  & 80 / 50  & 93 / 27  & 88 / 36 \\
stamp\_seal        & 2 / 0   & 2 / 0   & 18 / 0  & 1 / 0    & 3 / 4    & 15 / 5   & 19 / 3 \\
handover\_block    & 10 / 0  & 42 / 0  & 70 / 0  & 45 / 14  & 45 / 8   & 57 / 13  & 72 / 11 \\
turn\_switch       & 36 / 1  & 5 / 2   & 46 / 8  & 35 / 15  & 27 / 23  & 41 / 28  & 54 / 61 \\
stack\_bowls\_three & 63 / 0  & 48 / 0  & 57 / 5  & 51 / 17  & 66 / 24  & 83 / 6   & 74 / 59 \\
click\_bell        & 54 / 0  & 58 / 3  & 90 / 0  & 80 / 9   & 44 / 3   & 79 / 6   & 83 / 86 \\
\midrule
Average & 28.3 / 0.2 & 40.2 / 0.8 & 57.0 / 5.8 & 41.5 / 12.5 & 44.2 / 18.7 & 61.3 / 14.2 & \textbf{65.0 / 42.7} \\
\bottomrule
\end{tabular}
\caption{RoboTwin 2.0 success rate (\%); each cell is \textbf{clean / randomized} (the leaderboard's Easy / Hard). Values for DP~\citep{diffusionpolicy2023}, ACT~\citep{act2023}, DP3~\citep{dp3_2024}, RDT~\citep{rdt2024}, and $\pi_0$~\citep{black2024pi0} are from the RoboTwin 2.0 official leaderboard; $\pi_{0.5}$ and TFGCA (ours) are run under the same protocol and directly comparable. The last row is the 6-task average.}
\label{tab:robotwinfull}
\end{table*}

\section{LIBERO-Plus: Per-Suite Full Results}

Table~2 of the main paper reports only the average of the seven LIBERO-Plus perturbations across the four suites. Table~\ref{tab:liberoplusfull} gives TFGCA's full per-perturbation success rate (\%) on each of the four suites (Spatial, Object, Goal, Long); each cell is TFGCA, with the $\pi_{0.5}$ base and the delta relative to it in parentheses. The final Avg row is the four-suite average and corresponds to the two LIBERO-Plus rows of main-paper Table~2. Both models are self-tested under the same protocol.

\begin{table*}[t]
\centering
\scriptsize
\setlength{\tabcolsep}{2.5pt}
\begin{tabular}{@{}lcccccccc@{}}
\toprule
Suite & Camera & Robot & Language & Light & Background & Noise & Layout & Total \\
\midrule
Spatial & 68.3 (59.3, +9.0) & 88.6 (80.9, +7.7) & 87.7 (81.8, +5.9) & 99.0 (97.3, +1.7) & 99.2 (94.2, +5.0) & 65.8 (46.7, +19.1) & 97.4 (95.1, +2.3) & 85.8 (78.3, +7.4) \\
Object & 41.7 (33.6, +8.1) & 70.8 (58.5, +12.3) & 94.1 (84.5, +9.6) & 96.6 (100.0, $-$3.4) & 96.4 (97.6, $-$1.2) & 49.3 (40.8, +8.5) & 82.6 (82.9, $-$0.2) & 73.3 (67.9, +5.4) \\
Goal & 60.8 (45.1, +15.7) & 67.0 (58.4, +8.5) & 53.2 (58.3, $-$5.1) & 97.5 (95.7, +1.8) & 75.1 (77.2, $-$2.1) & 48.8 (39.3, +9.5) & 64.2 (60.7, +3.5) & 64.9 (59.9, +4.9) \\
Long & 35.1 (30.3, +4.8) & 67.2 (53.9, +13.3) & 92.2 (84.1, +8.1) & 92.7 (92.3, +0.4) & 80.6 (84.8, $-$4.2) & 45.2 (26.7, +18.5) & 82.4 (79.8, +2.6) & 67.9 (60.7, +7.2) \\
\midrule
\textbf{Avg} & \textbf{51.5 (42.1, +9.4)} & \textbf{73.4 (62.9, +10.5)} & \textbf{81.8 (77.2, +4.6)} & \textbf{96.4 (96.3, +0.1)} & \textbf{87.8 (88.4, $-$0.6)} & \textbf{52.3 (38.4, +13.9)} & \textbf{81.7 (79.6, +2.1)} & \textbf{73.0 (66.7, +6.3)} \\
\bottomrule
\end{tabular}
\caption{LIBERO-Plus per-suite, per-perturbation success rate (\%). Each cell is TFGCA ($\pi_{0.5}$ base, $\Delta$); the Avg row is the four-suite average and corresponds to the two LIBERO-Plus rows of Table~2 in the main paper.}
\label{tab:liberoplusfull}
\end{table*}

By suite, TFGCA's gains remain concentrated on the perturbations the base handles worst---Noise and Camera (e.g., Spatial-Noise $+19.1$, Long-Noise $+18.5$, Goal-Camera $+15.7$)---while on the near-saturated Light/Background it is roughly flat or slightly regresses, within the evaluation sampling variance. This is consistent with the average-level conclusion of Section~5.2 in the main paper.

\section{Real-Robot Task Decomposition (AgiBot A2)}

\paragraph{AgiBot A2 protocol.} TFGCA and the $\pi_{0.5}$ base use the same training data, number of steps, task initialization, and testing conditions. Each model is tested 20 times per task. Evaluation is strict: a single continuous execution counts as a success only if the goal state is fully reached; human intervention, mid-task retries, dropped objects, or the failure of any required sub-step count as failures.

Figures~\ref{fig:soap}--\ref{fig:doll} show a stage-wise keyframe decomposition of the three AgiBot A2 real-robot tasks. Keyframes are taken from a representative rollout and correspond one-to-one with Table~4 of the main paper.

\begin{figure}[t]
\centering
\includegraphics[width=\linewidth]{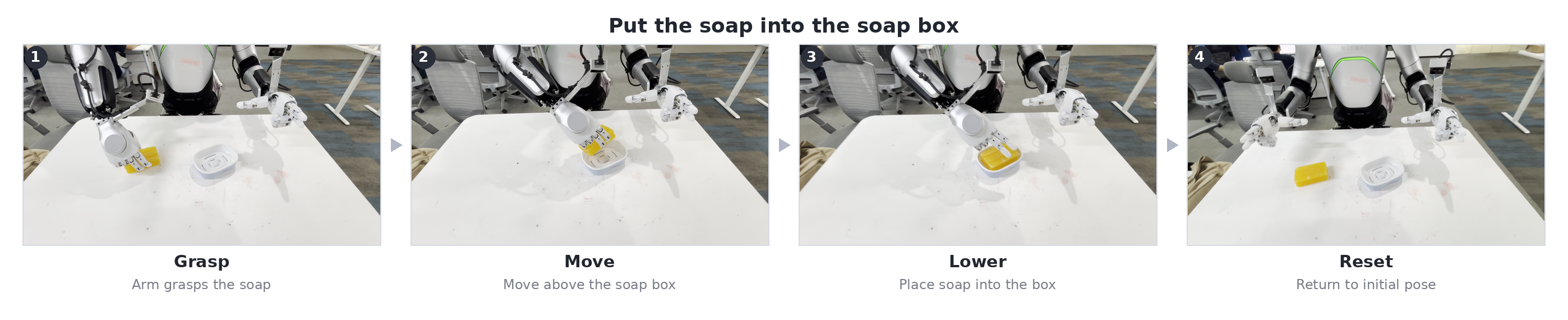}
\caption{Put the soap into the soap box. Four-stage keyframes: grasp the soap $\to$ move above the box $\to$ release into the box $\to$ reset.}
\label{fig:soap}
\end{figure}

\begin{figure}[t]
\centering
\includegraphics[width=\linewidth]{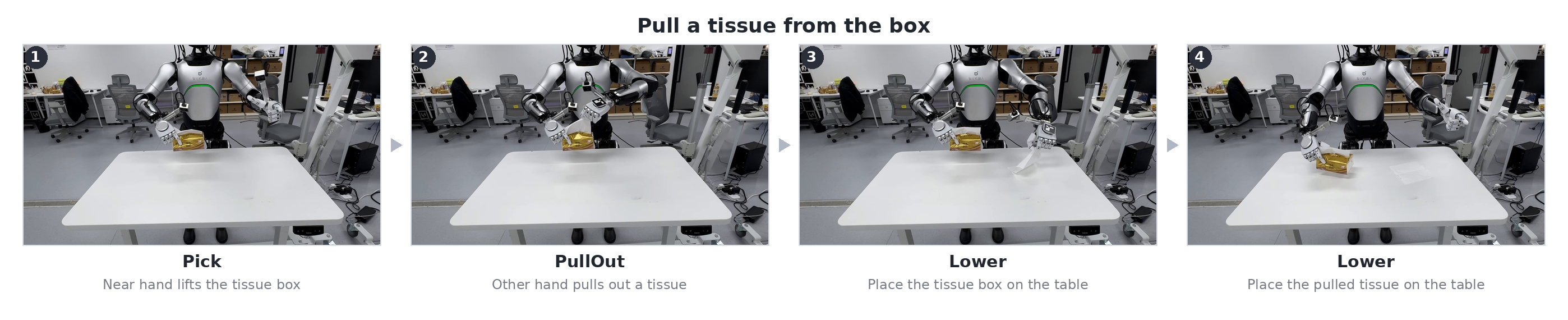}
\caption{Pull a tissue from the box. Four-stage keyframes: the near hand picks up the tissue box $\to$ the other hand pulls a tissue $\to$ put down the box $\to$ put down the tissue.}
\label{fig:tissue}
\end{figure}

\begin{figure}[t]
\centering
\includegraphics[width=\linewidth]{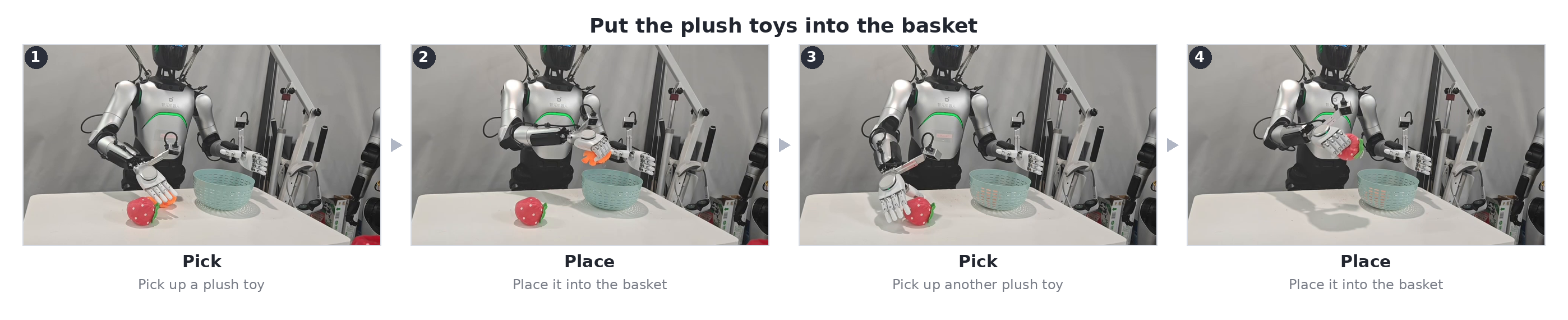}
\caption{Put the two plush toys into the basket. Keyframes: pick up a toy $\to$ place it in the basket, then repeat for the second toy.}
\label{fig:doll}
\end{figure}

\section{Resource Usage}

\subsection{Training Hardware and Wall-Clock}

\begin{table}[t]
\centering
\small
\setlength{\tabcolsep}{5pt}
\begin{tabular}{llr}
\toprule
Experiment & GPU & Wall-clock \\
\midrule
LIBERO & $2\times$ NVIDIA A800 80GB & $\sim$26 hours \\
RoboTwin 2.0 & $2\times$ NVIDIA A800 80GB & $\sim$28 hours \\
\bottomrule
\end{tabular}
\caption{Resource usage per training job (not the cumulative cost of a full benchmark).}
\label{tab:resource}
\end{table}

The resource records in Table~\ref{tab:resource} correspond to each training job rather than the cumulative cost of a whole benchmark. Beyond the measured training hardware and approximate wall-clock, we do not report peak memory, training throughput, end-to-end inference latency, or other quantities that were not independently measured, so as to avoid inferring real running costs from configuration files or theoretical values. Exact software versions were not recorded in a single verifiable log and are therefore not stated speculatively; the AgiBot A2 platform's internal resource configuration is also withheld for confidentiality reasons.

\subsection{Added Parameter Count}

The parameters TFGCA adds over the $\pi_{0.5}$ base are fully determined by the module structure and can be computed exactly from the architecture (no independent measurement needed). Taking model width $d=1024$ and LIBERO's action dimension $D=7$ (6-D Cartesian end-effector increment + 1-D binary gripper), the per-module counts are given in Table~\ref{tab:params}.

\begin{table}[t]
\centering
\small
\setlength{\tabcolsep}{6pt}
\begin{tabular}{lr}
\toprule
Module & Parameters \\
\midrule
\texttt{CrossGeometricAttention} & 4{,}198{,}403 (4.20M) \\
\texttt{PerDimensionSWTEncoder}   & 28{,}728 (0.029M) \\
\texttt{action\_proj}             & 7{,}168 (0.007M) \\
\midrule
\textbf{Total} & \textbf{4{,}234{,}299 (4.23M)} \\
\bottomrule
\end{tabular}
\caption{Added parameters of TFGCA at $d=1024$, $D=7$.}
\label{tab:params}
\end{table}

The added parameters are dominated by the geometric cross-attention's Q/K/V/output projections (four $1024\times1024$ matrices), which do not exist in the $\pi_{0.5}$ base and are the genuinely new part of TFGCA; the per-dimension SWT encoder and the action-space projection (\texttt{PerDimensionSWTEncoder} and \texttt{action\_proj}) are almost negligible. The total of about \textbf{4.23M} is roughly \textbf{0.10\%} of the $\sim$4B-parameter $\pi_{0.5}$ backbone, consistent with the main paper's positioning of TFGCA as a lightweight drop-in module. This count is an architecture-determined exact value and does not change with training data or steps; when the action dimension $D$ varies with the embodiment (e.g., $D=14$ for RoboTwin and AgiBot), only \texttt{action\_proj} scales linearly in $d\times D$, with negligible effect on the total.

\section{Temporal Orthogonal Division-of-Labor (TO-DoL): Definition and Data Analysis}

Sections~1 and~4.5 of the main paper use the claim that ``the near-orthogonal structure of division-of-labor coordination lives along the time axis'' as data support for the design motivation. This section gives the full definition, metrics, and robustness analysis. The analysis covers RoboTwin 2.0's \texttt{handover\_block} and \texttt{stack\_bowls\_three} tasks, 50 demonstrations each, over the aloha-agilex and arx-x5 bimanual embodiments. It is based on existing demonstration trajectories and involves no new training or evaluation.

\subsection{Motivation: Two Notions of Orthogonality}

A natural but untested hypothesis is a \emph{same-instant} version of orthogonality: at some frame $t$, two division-of-labor sub-actions A and B are simultaneously active with joint directions $v_A(t)\perp v_B(t)$. The data nearly falsify this hypothesis (Table~\ref{tab:sameinstant}): same-instant orthogonal coupling is essentially absent. The same data, however, exhibit very strong positive structure \emph{along the time axis}---this is TO-DoL.

\begin{table}[t]
\centering
\footnotesize
\setlength{\tabcolsep}{4pt}
\begin{tabular}{@{}p{0.40\columnwidth}p{0.30\columnwidth}p{0.20\columnwidth}@{}}
\toprule
Test & Observation & Verdict \\
\midrule
$\ge 3$ subspaces co-active within a 10-frame chunk & \textbf{0.0\%} (all four datasets) & almost never ``many things at once'' \\
Cross-arm co-activation rate & 14\% / 13\% & \emph{below} random control (27--30\%) \\
Left/right direction corr.\ in active frames (handover) & $-0.24$ to $-0.37$ & one arm moves, the other pauses \\
\bottomrule
\end{tabular}
\caption{Same-instant orthogonal coupling is near-zero.}
\label{tab:sameinstant}
\end{table}

\subsection{Definition}

Let each trajectory $e$ have per-frame joint velocity $v(t)\in\mathbb{R}^{14}$ and active-frame direction $u(t)=v(t)/\lVert v(t)\rVert$. Segment the trajectory by its \emph{dominant active subspace} into ordered phases $P_1\prec P_2\prec\dots\prec P_K$, where phase $k$'s principal direction $\bar u_k$ is the principal component of $u(t)$ within that phase. The trajectory has \textbf{Temporal Orthogonal Division-of-Labor (TO-DoL)} if the following three conditions hold simultaneously:
\begin{itemize}
\item \textbf{(C1) Inter-phase orthogonality}: $\max_{i\neq j}\lvert\langle\bar u_i,\bar u_j\rangle\rvert\le\varepsilon$, i.e., different temporal phases occupy near-orthogonal joint directions.
\item \textbf{(C2) Non-trivial temporal order}: the phase order $(P_1\prec\dots\prec P_K)$ and relative onset times are stable across trials and \emph{significantly better than a null control}.
\item \textbf{(C3) Within-phase coherence}: within a single phase, the direction $u(t)$ is stable (low variance), so the phase genuinely ``occupies'' a subspace rather than wandering.
\end{itemize}

\paragraph{C2 is the crux.} In high dimensions any two sparse directions with disjoint support are already near-orthogonal, so C1 is nearly geometrically trivial and is not evidence on its own; what genuinely needs data support is C2---the ordering must beat controls that ``shuffle the phase order'' and ``randomly pair across trials''.

\subsection{Condition-by-Condition Verification}

\paragraph{C1 (inter-phase orthogonality) --- holds, but is a trivial baseline.} The 14-D direction cosine between the left-dominant and right-dominant segments: aloha-agilex $\lvert\cos\rvert=0.0013$ (p90 $\approx 0.0017$), arx-x5 $=0.0014$ (p90 $\approx 0.002$). Disjoint coordinate blocks are almost exactly orthogonal by geometry, so the weight of evidence rests on C2.

\paragraph{C2 (non-trivial temporal order) --- holds strongly (Table~\ref{tab:c2}).}

\begin{table}[t]
\centering
\footnotesize
\setlength{\tabcolsep}{4pt}
\begin{tabular}{@{}p{0.46\columnwidth}p{0.26\columnwidth}p{0.16\columnwidth}@{}}
\toprule
Metric & Value & Control \\
\midrule
Left$\to$right phase-order rate (handover) & 100\% of episodes & --- \\
Gripper-event order (L-grasp $<$ R-grasp $<$ L-release) & 100\% (both embodiments) & --- \\
Grasp$\to$release interval & 22 frames, std $=0$ & --- \\
Envelope-preserving, order-shuffled clean control & real $3.5$--$5\times$ tighter than control & --- \\
\bottomrule
\end{tabular}
\caption{C2 evidence: the temporal order is stable and beats the clean control.}
\label{tab:c2}
\end{table}

Two honest corrections must be stated. (1) \emph{The effect size is $3.5$--$5\times$, not higher.} A ``random-phase'' control destroys the entire motion envelope (not just the ordering), inflating the effect size to $9$--$37\times$; using an \emph{envelope-preserving, order-only-shuffled} clean control, the real order structure is $3.5$--$5\times$ tighter, and both the main paper and this section quote the latter. (2) \emph{The coupling sits at the ``population-template'' level, not per-trial feedback.} Randomly \emph{pairing} across episodes (a left-arm trajectory with any right-arm trajectory) almost perfectly reproduces the ordering coupling (ratio $\approx 1.0$). This means each arm follows its own fixed ``temporal script'' and the two scripts happen to be staggered---a shared sequential template rather than one arm closed-loop-adjusting to the other in real time. C2 still holds (the order is significant, stable, and beats the clean control), but is more accurately described as a population-level, script-level temporal coupling.

\paragraph{C3 (within-phase coherence) --- essentially holds.} Each arm's direction is stable within its dominant phase; the handover left arm is bimodal (grasp at $t\approx0$, release at $t\approx0.8$), but both peaks lie in the left-arm subspace and do not break C3---they merely split the left-arm phase into two segments.

\subsection{Robustness}

\begin{itemize}
\item \textbf{Across embodiments}: aloha-agilex and arx-x5 give same-order numbers and consistent conclusions (C1 $\lvert\cos\rvert$ 0.0013 vs 0.0014; C2 order 100\% of episodes; deterministic gripper timing lock).
\item \textbf{Across tasks}: handover (sequential hand-off) \emph{strongly satisfies C2} (clear ordering); stack (left--right symmetric, no strict order, 48--50\%) \emph{weakens C2} but still satisfies C1/C3. The ``order'' strength of C2 is thus task-dependent---the main paper states C2 as ``a stable temporal-phase separation exists'', not universally ``strict order''.
\item \textbf{Across datasets}: LIBERO (end-effector delta, single arm) has no left/right-arm structure and TO-DoL does not apply; but it corroborates the triviality of ``same-instant orthogonality'', which does not conflict with this definition.
\end{itemize}

\section{Identity Initialization: Formal Properties}

Section~4.4 of the main paper constructs the identity initialization from a zero-initialized residual; here we give its formal statement. Let $f_\theta$ be the base policy and $f_{\theta,\phi}$ its TFGCA-augmented version with module parameters $\phi$ and output projection $W_O=0$. Then $\tilde{H}=H+W_O(AV)=H$, giving two complementary properties.
\begin{itemize}
\item \textbf{(a) Safe (non-decreasing capacity).} $f_{\theta,\phi}=f_\theta$ pointwise, so the base policy lies in the augmented hypothesis class and $\min_{\phi}\mathcal{L}(f_{\theta,\phi})\le\mathcal{L}(f_\theta)$: attaching TFGCA cannot worsen the best attainable fit, and it changes no output at initialization.
\item \textbf{(b) Trainable (escaping the identity).} The initial gradient of the loss with respect to $W_O$ is $\nabla_{W_O}\mathcal{L}=\delta\,(AV)^\top$, where $\delta$ is the upstream gradient at the module output. As long as the value tokens satisfy $V\neq 0$ (and $\delta\neq0$), this gradient is generically nonzero, so $W_O$ leaves zero and the module begins to act; if $V=0$ then $AV=0$ forces $\nabla_{W_O}\mathcal{L}=0$, and the gradient to the upstream wavelet parameters also vanishes through $W_O=0$, trapping the module at the identity. This is why we initialize the sub-band embeddings and the action-space projection to be nonzero and zero only $W_O$.
\end{itemize}

\noindent\emph{Proof.} With $W_O=0$ we have $\tilde H=H$ and the head and backbone are unchanged; (a) follows by evaluating the augmented class at $W_O=0$, and (b) follows by differentiating $\tilde H=H+W_O(AV)$ with respect to $W_O$. $\square$

\paragraph{Safety and effectiveness are two different things.} These properties guarantee function preservation \emph{at the moment of initialization}---the module reproduces the base output pointwise on attachment and does not decrease capacity, a statement about attachment risk. They \emph{neither} guarantee that the jointly fine-tuned behavior is no worse than the base (there are indeed a few per-task regressions on clean, Section~5.3 of the main paper) \emph{nor} constitute evidence that the module is effective; effectiveness can only be supported by post-training empirical results (Section~5 of the main paper). The two should not be conflated.

\end{document}